\def\DEFEQ{\overset{\underset{\mathrm{\Delta}}{}}{=}}
\def\INNERDOT#1{\langle #1 \rangle}
\def\Q1{\tilde{q}_1}
\def\Z2{\tilde{z}_2}
\newtheorem{proposition}{Proposition}
\begin{document}

\title{Learning the Relation between Similarity Loss and Clustering Loss in Self-Supervised Learning}

\author{Jidong~Ge,~
		Yuxiang~Liu,~
		Jie~Gui,~Senior~Member,~IEEE,~
		Lanting~Fang,
		Ming~Lin,
		James Tin-Yau Kwok,~Fellow,~IEEE,~
		LiGuo~Huang,
		Bin~Luo
\thanks{J. Ge, Y. Liu, and B. Luo are with
State Key Laboratory for Novel Software Technology, Software Institute, Nanjing University, 
Nanjing 210093, China.
(e-mail: gjd@nju.edu.cn, mg1932010@smail.nju.edu.cn, luobin@nju.edu.cn).}
\thanks{J. Gui and L. Fang are with the School of Cyber Science and Engineering, Southeast University and Purple Mountain Laboratories, Nanjing 210000, China (e-mail: \{guijie, lanting\}@seu.edu.cn).}
\thanks{M. Lin is with Amazon.com L.L.C. and this work was done at Alibaba Group (e-mail: minglamz@amazon.com).}
\thanks{James Tin-Yau Kwok is with the Department of Computer Science and Engineering,
The Hong Kong University of Science and Technology, Hong Kong
999077, China (e-mail: jamesk@cse.ust.hk).}
\thanks{L. Huang is with the Department of Computer Science, Southern Methodist University, Dallas, TX 75205, USA (email:  lghuang@smu.edu).}
}

\markboth{Journal of \LaTeX\ Class Files,~Vol.~14, No.~8, August~2021}%
{Shell \MakeLowercase{\textit{et al.}}: A Sample Article Using IEEEtran.cls for IEEE Journals}


\maketitle

\begin{abstract}
Self-supervised learning enables networks to learn discriminative features from massive data itself. Most state-of-the-art methods maximize the similarity between two augmentations of one image based on contrastive learning. By utilizing the consistency of two augmentations, the burden of manual annotations can be freed. Contrastive learning exploits instance-level information to learn robust features. However, the learned information is probably confined to different views of the same instance. In this paper, we attempt to leverage the similarity between two distinct images to boost representation in self-supervised learning. In contrast to instance-level information, the similarity between two distinct images may provide more useful information. Besides, we analyze the relation between similarity loss and feature-level cross-entropy loss. These two losses are essential for most deep learning methods. However, the relation between these two losses is not clear. Similarity loss helps obtain instance-level representation, while feature-level cross-entropy loss helps mine the similarity between two distinct images. We provide theoretical analyses and experiments to show that a suitable combination of these two losses can get state-of-the-art results. Code is available at https://github.com/guijiejie/ICCL.
\end{abstract}

\begin{IEEEkeywords}
Self-supervised learning, Image representation, Image classification.
\end{IEEEkeywords}

\section{Introduction}\label{sec:introduction}

\IEEEPARstart{R}{ecently}, un-/self-supervised representation learning has made steady progresses. Many self-supervised methods~\cite{hjelm2018learning,oord2018representation,bachman2019learning,caron2019unsupervised,S4L,MoCo,gpt3,SimCLR,henaff2020data,barlowtwins,dino, MAE} are closing the performance gap with supervised pretraining in computer vision. These methods leverage the property of the data itself. Most self-supervised methods attempt to build upon the instance discrimination~\cite{bojanowski2017unsupervised,dosovitskiy2015discriminative,wu2018unsupervised,cao2020parametric} task by maximizing the agreement between two augmentations of one image and scatter different instances. The encouraging results of self-supervised learning depend on strong transformations~\cite{hjelm2018learning,SimCLR,DBLP:conf/nips/Tian0PKSI20} (e.g., image crop and color distortion) and similarity loss. BYOL~\cite{BYOL} and SimSiam~\cite{SimSiam} extend similarity loss and remove the dependency on negative instances~\cite{richemond2020byol,tian2020understanding,directpred}. These methods implicitly do scattering and learn robust representations of different transformations of the same instance. In this paper, contrastive learning based methods represent methods such as MoCo~\cite{MoCov3} and BYOL. The key point of those methods is to minimize the similarity between augmentations. 

Unlike contrastive learning based methods that learn invariance to transformations~\cite{PIRL}, some works attempt to utilize clustering~\cite{yang2016joint,xie2016unsupervised,yan2020clusterfit,huang2019unsupervised,caron2018deep,DBLP:conf/iclr/AsanoRV20a,SwAV,li2021contrastive} with pseudo-labels. Most instance-level contrastive learning based methods may suffer from the misleading of similar backgrounds~\cite{zhao2020distilling}. No matter which transformation we choose, the image background may not be discarded entirely. The background pixels provide a shortcut to minimize similarity loss. By contrast, the similarity between distinct images may improve the robustness of background information. Images of the same object in different backgrounds are learned to maximize the similarity. This learning manner is pivotal and more similar to the learning manner of human beings. People can ignore the background because they have already seen hundreds of thousands of the same object in different backgrounds. Traditional clustering-based methods classify images through pseudo-labels. Those methods may correlate images of the same class. However, the generation of pseudo-labels needs much computation. Some online clustering methods~\cite{SwAV} assign labels for batch examples by Sinkhorn-Knopp algorithm~\cite{cuturi2013sinkhorn}. Sinkhorn-Knopp algorithm assures that batch examples are equally partitioned by the prototypes, preventing the trivial solution where every image has the same label.

Mining the similarity between two distinct images is a possible manner to improve discrimination. Most of the state-of-the-art methods (e.g., SwAV~\cite{SwAV} and DINO~\cite{dino}) directly leverage feature-level cross-entropy loss to learn the similarity between different images. In general, similarity loss and feature-level cross-entropy loss are used in different styles of self-supervised methods. In SimSiam, authors find cross-entropy loss may not be applicable to contrastive learning based methods. In this paper, we try to analyze the relation between these two losses. Through theoretical and experimental analyses, we point out that these two losses can be complementary. From the perspective of gradients, we analyze the difference between similarity loss and cross-entropy loss. These findings imply that a suitable combination may boost class-level and instance-level representations. Our contributions are listed as follows.
\begin{itemize}
	\item We demonstrate that supervised learning can catch the relation between different images of the same class. Besides, it is feasible for self-supervised methods to leverage the similarity between two distinct images. 
	\item We provide theoretical and experimental analyses to explain why previous contrastive learning based methods (e.g., SimSiam) have inferior results with cross-entropy loss. This point is critical to maintaining the advantages of similarity loss and cross-entropy loss.
	\item In contrast to those clustering-based methods, we focus on the relation between similarity loss and feature-level cross-entropy loss. Based on this relation, we propose a simple but effective method to exploit both instance-level contrastive and intra-class contrastive learning (\textbf{ICCL}). Our method attempts to mine the information between two distinct images in a suitable way, which reduces the impact of wrong clustering. Compared with SwAV and DINO, our method can work without centering (used in DINO) and Sinkhorn-Knopp (used in SwAV). The hyper-parameter settings are robust to different datasets. 
\end{itemize}

\section{Prelimilaries}
The intention of this section is to introduce some notations of different loss functions in this paper. In particular, we provide formal definitions for loss functions in SimSiam/BYOL (called similarity loss) and loss functions in SwAV/DINO (called clustering/cross-entropy loss).

\subsection{Methods Based on Similarity Loss}
Many methods use similarity loss~\cite{hadsell2006dimensionality} to maximize the agreement of two views of the same image. Generally speaking, the loss function in MoCo is a typical similarity loss function
\begin{equation}
	L_{contrastive}^{q} = - \log{\frac{\INNERDOT{q, k_{+}} / \tau}{\sum_{i=0}^{|B|} \INNERDOT{q, k_i} / \tau}}.
\end{equation}
Here $q$ is the representation of an image. $L_{contrastive}^{q}$ denotes the loss for representation $q$. As most self-supervised learning loss functions can be divided into the sum of losses corresponding to a single instance, we will omit the superscript of loss function in the following. $k_{+}$ denotes the positive example in the batch. Traditionally, two transformations $T_1$ and $T_2$ will transform image $I$ to different views of the same image. The representations of these views will be regarded as positives. $B$ denotes the batch of data and $|B|$ denotes the batch size. The similarity loss intends to make the similarity between positive features large and reduce the similarity between negative features. In \cite{DBLP:conf/icml/0001I20}, authors provide a detail deconstruction for similarity loss
\begin{align}
	L_{contrastive} &= - \log{\frac{\INNERDOT{q, k_{+}} / \tau}{\sum_{i=0}^{|B|} \INNERDOT{q, k_i} / \tau}} \nonumber \\
	&= \log{\sum_{i=0}^{|B|} \exp{\INNERDOT{q, k_i} / \tau}} - \frac{\INNERDOT{q, k_{+}}}{\tau}.
\end{align}
The first term is called uniformity term. If $q$ and $k+$ are normalized to unit, the $\INNERDOT{q, k_+}$ in the second term is the cosine similarity. Therefore, the loss function for MoCo may also be regarded as cosine similarity loss with uniformity term. 

For similarity loss, we define the output features computed by the neural network $z'_1$ and $z'_2$, respectively. $z'_1$ and $z'_2$ are two views of the same image. In BYOL~\cite{BYOL} and SimSiam~\cite{SimSiam}, one of the features will be passed through an extra predictor (e.g., the predictor encodes $z'_1$ as $q_1$). The ultimate features are denoted as $q_1$ and $z_2$. It should be noted that $z_2$ will not pass the gradients to the network ($z_2 = sg(z'_2)$ and $sg()$ denotes the stop gradient operation). With the definition of $\tilde{q} \DEFEQ \frac{q}{|| q ||}$, where $||.||$ denotes the $l_2$ norm, similarity loss can be represented by
\begin{equation}
	L_{similarity} = - \INNERDOT{\Q1, \Z2}.
	\label{equ:loss_contrastive}
\end{equation}
Here $\langle . , . \rangle$ is inner product. The uniformity term~\cite{SimCLR, chen2020big, MoCo, MoCov3} is ignored. This loss will maximize the similarity between two views of the same image and learn useful representations that are robust to strong augmentations. Contrastive learning (include methods use $L_{contrastive}$ and $L_{similarity}$) based methods are robust to various scales of datasets. 

\subsection{Methods Based on Clustering Loss}
Another form of self-supervised learning are based on clustering~\cite{caron2018deep, DBLP:conf/iclr/AsanoRV20a, SwAV, dino}. These methods generate pseudo-labels and use pseudo-labels to maximize cross-entropy loss:
\begin{align}
	L_{ce} = - \sum_{i=1}^{D} f(i, z_2) \log{p(i|q_1, \tau)}, \nonumber\\
	where \quad p(i|x, \tau) = \frac{\exp(\frac{x^{(i)}}{\tau})}{\sum_{j=1} \exp(\frac{x^{(j)}}{\tau})}.
	\label{equ:traditional_ce}
\end{align}
Here $f$ is the function to generate pseudo-labels (e.g., $f$ is Sinkhorn-Knopp algorithm in SwAV and $p(i|z_2)$ with centering mechanism in DINO). $C$ is the dimensionality of vectors $q_1$ and $z_2$. $W$ denotes clustering prototypes, which is a $C$-by-$D$ matrix. $\tau$ is the temperature~\cite{hinton2015distilling} to adjust the sharpness of the probability distribution, and the default value is 1. These clustering-based methods attempt to exploit the relation of different instances to learn robust representations. However, these methods may suffer from incorrect clustering. The learned representation is based on pseudo-labels. The hyper-parameters may be hard to be extended to a large number of datasets.

\section{Method}
In this section, we first illustrate the difference between supervised learning and self-supervised learning from recall metric. The difference indicates the distribution of intra-class data points is dispersed for self-supervised methods. However, supervised learning may aggregate intra-class features and expand the distance between different categories. This point motivates our following study. Then we analyze why $L_{ce}$ can capture the similarity between two distinct images. Then we demonstrate how to establish the relation between similarity loss and cross-entropy loss. Finally, we describe the details of our method and the difference from other clustering-based methods.

\subsection{The Intra-Class Distance for Self-Supervised Learning}
\begin{table}[!t]
    \caption{The comparison of supervised methods and self-supervised methods. The default network backbone is ResNet-50. The default training epochs are 100.}
    \label{tab:recall_ans}
    \centering
    \begin{tabular}{c|cc}
        \toprule
        & \multicolumn{2}{c}{ImageNet} \\
        \cline{2-3}
        Methods & $Precision@k=5$ & top-1 acc\\
        \midrule
        Supervised & \textbf{52.1} & \textbf{76.5}\\
        SimSiam & 27.1 & 67.3\\
        BarlowTwins & 28.2 & 67.4 \\
        BYOL & 28.1 & 66.0 \\
        BYOL-300ep & 32.4 & \underline{72.2} \\
        MoCo & 27.5 & 67.4 \\
        MoCo-ViT & \underline{37.0} & 69.1
    \end{tabular}
\end{table}
Although self-supervised learning gets promising results in many tasks and datasets, it still has several problems. For example, most self-supervised learning methods should leverage the linear evaluation protocol for image classification tasks. The trainable fully-connected layer is used to distinguish the features of different classes. This fully-connected layer is essential as Table~\ref{tab:recall_ans} shows. We use $Precision@k = num(true)/k$ to express the recall metric of different self-supervised methods. $num(true)$ denotes the number of positive nearest neighbors of the query image in the returned $k$ images. The positive examples are defined as images of the same class. This metric denotes the ability to recall images of the same category. In other words, if the features' distribution of the same class is compact and the centroid distance of different categories is relatively large, the $Precision@k$ may have a good performance. The outputs of backbone are used as the retrieval features. For top-1 accuracy, we train an extra fully-connected layer to do classification.

As Table~\ref{tab:recall_ans} shows, self-supervised learning methods are hard to learn compact representations from views of different instances. Although the gap of top-1 accuracy is close, we find the recall metric of self-supervised learning methods is still far less than supervised learning. This point indicates that most self-supervised learning may learn rough representations. 

Fig.~\ref{fig:distribution} visualizes the results of the network in different stages. Self-supervised learning is hard to capture intra-class relation at the beginning of the training because centroids of different classes are close. For example, SwAV will try to do clustering at the beginning of the training. However, it is hard to learn useful clustering information from massive irrelevant data. By contrast, if one can do clustering in Fig.~\ref{fig:short-b}, the clustering information may help networks to compact the features of the same class. One of the approaches is to avoid using intra-class information at the beginning of the training and leverage the intra-class information after several epochs. Therefore, the key point is to find a loss function that can directly replace the instance-level loss function from the perspective of the gradient.

\begin{figure}[!t]
\centering
\subfloat[]{
    \includegraphics[width=1.6in]{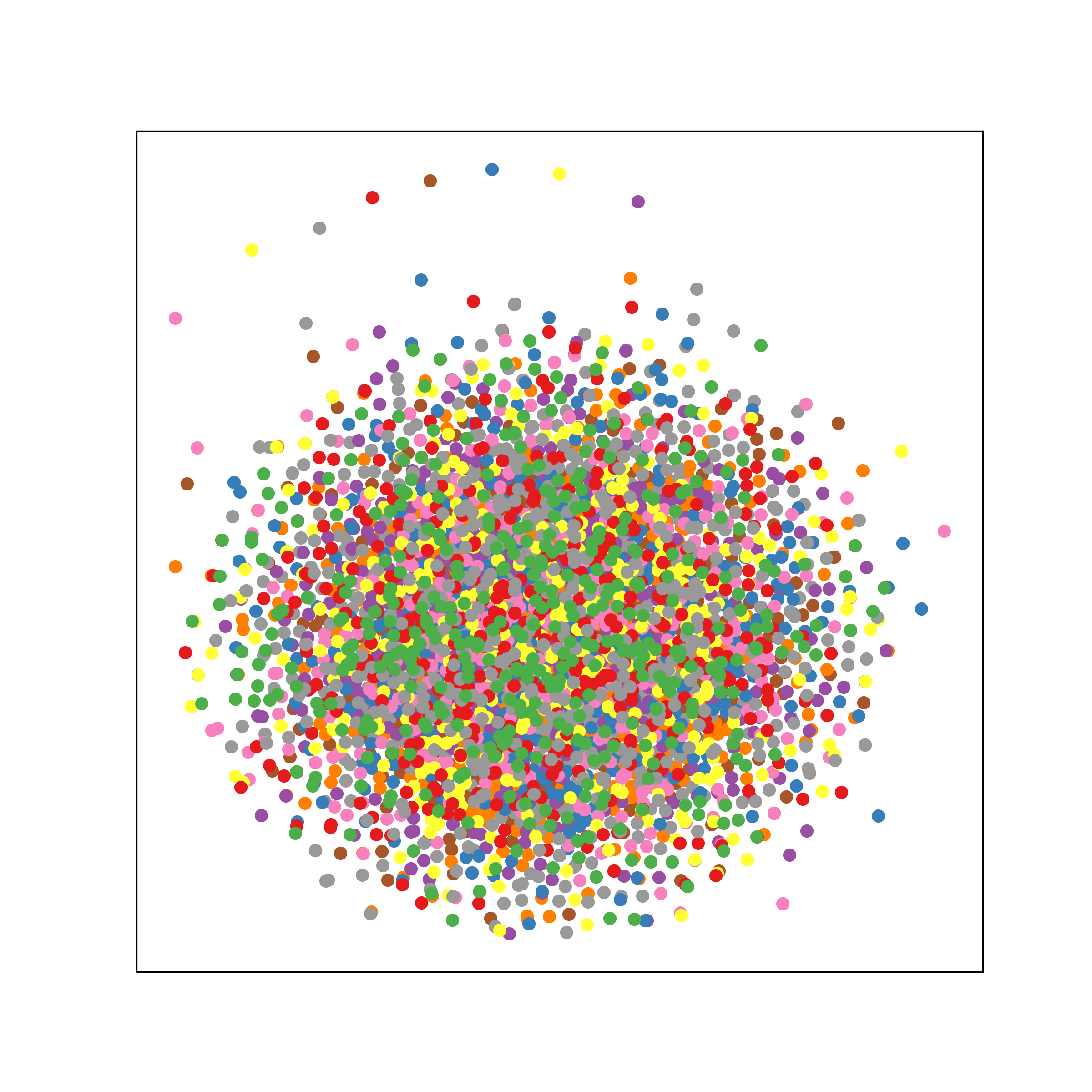}%
    \label{fig:short-a}
}
\hfil
\subfloat[]{
    \includegraphics[width=1.6in]{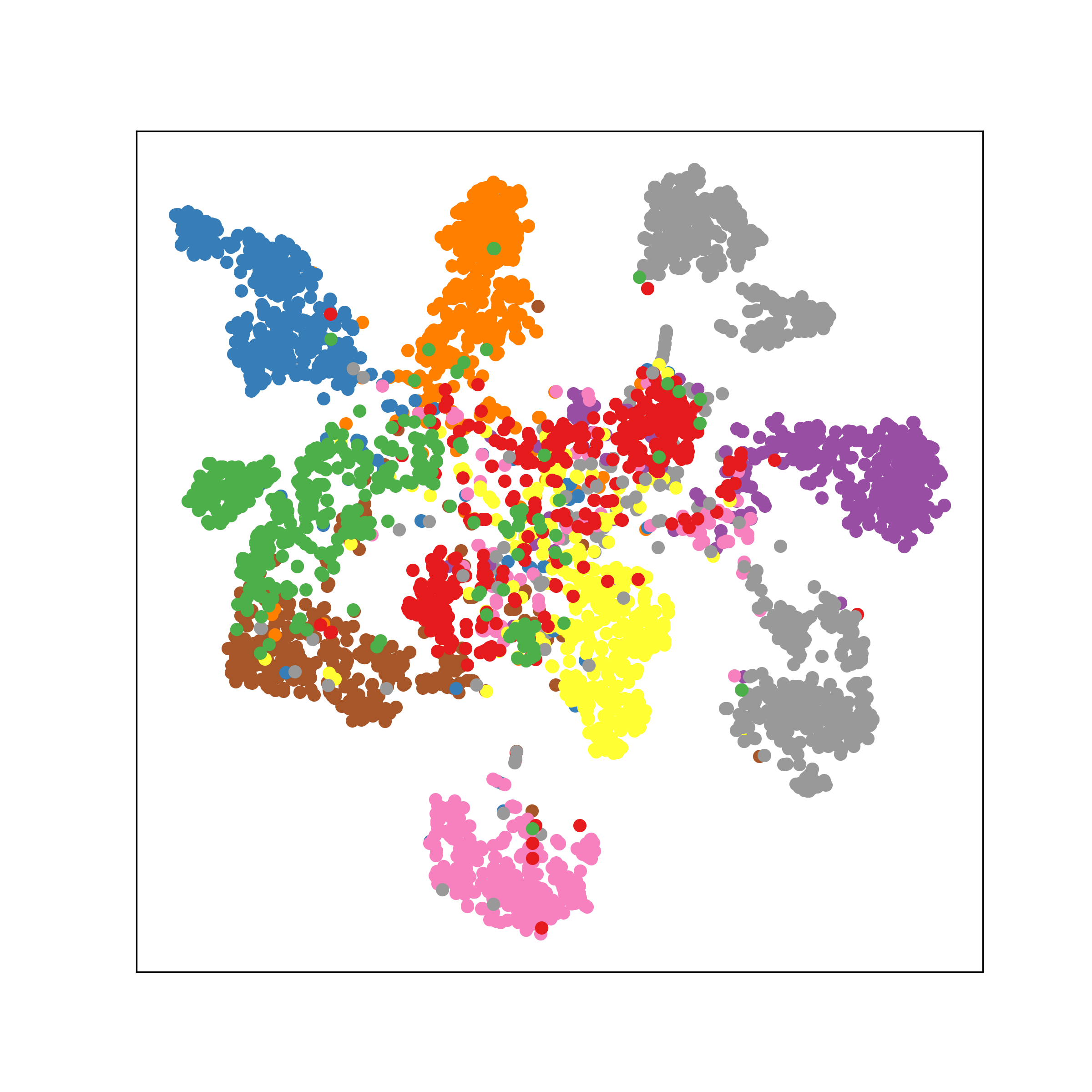}%
    \label{fig:short-b}
}
\caption{The t-SNE~\cite{tsne} visualizes the results of the network in different stages. The left figure is generated at the beginning of the training. The right figure is generated at half of the self-supervised training procedure.}
\label{fig:distribution}
\end{figure}

\subsection{The Role of Cross-Entropy Loss}
For simplicity, we first analyze the situation of supervised learning. Given a batch of images $X = \{x_1,...,x_N\}$ and labels $Y = \{y_1,...,y_N\}$, where batch size is $N$, the corresponding outputs of the image encoder (network) is $Z=\{z_1,...,z_N\}$. All images are classified into $C$ classes. Therefore $z_i$ is a $1$-by-$C$ vector. The loss function is
\begin{equation}
	L = - \frac{1}{|B|}\sum_{i=1}^{|B|}\log{p(y_i|z_i, \tau)},
\end{equation}
where $B$ denotes the batch data and $|B|$ denotes the size of batch. The default value of $\tau$ is 1.
\begin{proposition}
	Assume the network has basic ability to distinguish instances. For images $x_i$ and $x_j$ of the same class $y$ in batch $B$, minimizing loss function is equivalent to maximizing the lower bound of the similarity between two examples' probability distributions.
	\label{prop:similarity}
\end{proposition}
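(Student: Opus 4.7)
The plan is to interpret the ``similarity between two examples' probability distributions'' as the inner product
\[
S(z_i,z_j) \;=\; \sum_{k=1}^{C} p(k\mid z_i,\tau)\, p(k\mid z_j,\tau),
\]
which is the natural analogue of the inner product appearing in $L_{similarity}$ but applied to the softmax outputs rather than to features. With this choice, the task reduces to showing that the cross-entropy terms contributed by $x_i$ and $x_j$ control a lower bound on $S(z_i,z_j)$.

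First, I would write out the two loss contributions corresponding to the same class $y$:
\[
L_i + L_j \;=\; -\log p(y\mid z_i,\tau) \;-\; \log p(y\mid z_j,\tau).
\]
Because every summand defining $S(z_i,z_j)$ is non-negative, a one-term lower bound immediately gives
\[
S(z_i,z_j) \;\geq\; p(y\mid z_i,\tau)\, p(y\mid z_j,\tau).
\]
Taking logarithms and using monotonicity,
\[
\log S(z_i,z_j) \;\geq\; \log p(y\mid z_i,\tau) + \log p(y\mid z_j,\tau) \;=\; -(L_i+L_j).
\]
Hence minimising $L_i+L_j$ is the same as maximising $-(L_i+L_j)$, which is a lower bound on $\log S(z_i,z_j)$, and therefore on $S(z_i,z_j)$ itself. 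This gives the equivalence claimed in the proposition.

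The second step is to use the hypothesis that the network has ``basic ability to distinguish instances'' to argue that the bound is non-vacuous. Concretely, once $p(y\mid z_i,\tau)$ and $p(y\mid z_j,\tau)$ are bounded away from the uniform value $1/C$, the remaining $C-1$ terms in $S(z_i,z_j)$ contribute at most a controllable quantity (bounded using $\sum_{k\neq y} p(k\mid z_i,\tau) = 1 - p(y\mid z_i,\tau)$ and the same for $j$, together with a Cauchy--Schwarz step), so the single-term lower bound becomes tight as training proceeds. I would state this as a quantitative gap between $\log S$ and $-(L_i+L_j)$ that shrinks whenever both predictions are confidently placed on class $y$.

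The main obstacle I anticipate is not the algebra, which is essentially one application of term-dropping and monotonicity, but rather pinning down an appropriate definition of ``similarity'' that (i) is consistent with the notation and intuition used elsewhere in the paper for $L_{similarity}$, and (ii) makes the phrase ``basic ability to distinguish instances'' do real work. I would therefore be explicit that the similarity used here is the inner product of probability vectors, and phrase the discrimination assumption as a lower bound of the form $p(y\mid z_i,\tau), p(y\mid z_j,\tau) \geq 1/C + \varepsilon$ for some $\varepsilon>0$, so that the lower bound on $S(z_i,z_j)$ controls a non-trivial fraction of the total mass and the equivalence stated in the proposition is not merely formal.
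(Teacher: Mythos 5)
Your proposal is correct and follows essentially the same route as the paper: isolate the two same-class terms of the batch loss and drop the off-class products to get $p(y\mid z_i)\,p(y\mid z_j) \le \sum_{c} p(c\mid z_i)\,p(c\mid z_j)$, so that minimizing the cross-entropy contribution (equivalently maximizing $\log p(y\mid z_i)+\log p(y\mid z_j)$) maximizes a lower bound on the similarity of the probability distributions. The only cosmetic difference is that the paper takes one further step and bounds the inner product by the cosine similarity $S(\mathbf{P}(z_i),\mathbf{P}(z_j))$ (valid because each probability vector has $\ell_2$ norm at most $1$), whereas you define the similarity directly as the inner product; your added quantitative reading of the ``basic ability to distinguish instances'' assumption is a reasonable elaboration but not needed for the paper's argument.
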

\begin{proof}
	The cross-entropy loss can be expressed by
	\begin{align}
		L =& - \frac{1}{|B|}\sum_{i'=1, i' \notin \{i,j\}}^{|B|}log \; p(y_{i'}|z_{i'}) \nonumber \\
		&- \frac{1}{|B|}(log \; p(y|z_i) + log \; p(y|z_j)),
	\end{align}
	where $x_i$ and $x_j$ belong to the same class $y$. By denoting $\mathbf{P}(z_i) \in \mathbb{R}^C$ as a stack of $p(c|z_i)$ over different classes, which represents the probability distributions for image $x_i$, minimizing the cross-entropy loss may maximize
	\begin{align}
		p(y|z_i) \cdot p(y|z_j) &\leq \sum_{c} p(c|z_i) \cdot p(c|z_j) \nonumber \\
		&\leq S(\mathbf{P}(z_i), \mathbf{P}(z_j)), \nonumber\\
		where \quad S(p, q) &\DEFEQ \frac{\langle p, q \rangle}{\|p\| \cdot \|q\|}.
	\end{align}
	If another instance of the same class can not be found in batch (e.g., can not find $x_j$), the one-hot vector $p(y|z_j) = 1$ and $log \; p(y|z_j) = 0$ can also satisfy our proposition and do not influence the loss. The proof is completed.
\end{proof}
According to Prop.~\ref{prop:similarity}, supervised learning will learn the similarity between two distinct images when minimizing cross-entropy if the network has basic ability to distinguish instances. For those clustering-based methods, cross-entropy loss may also capture this information when the probability distribution of pseudo-labels is sharp. However, as Fig.~\ref{fig:distribution} has shown, it is hard to leverage the similarity between correlated images at the beginning of the training in a self-supervised learning manner. In self-supervised learning such as SwAV and DINO, the cross-entropy loss is likely to draw close uncorrelated images at the beginning of the training (Fig.~\ref{fig:short-a}). After half of the self-supervised training procedure, images of the same classes may be close (Fig.~\ref{fig:short-b}). The cross-entropy loss may learn the similarity between correlated images at this stage. Thus, the problem is how to convert similarity loss into cross-entropy loss naturally during the training.
\begin{figure}[!t]
	\centering
	\includegraphics[width=3in]{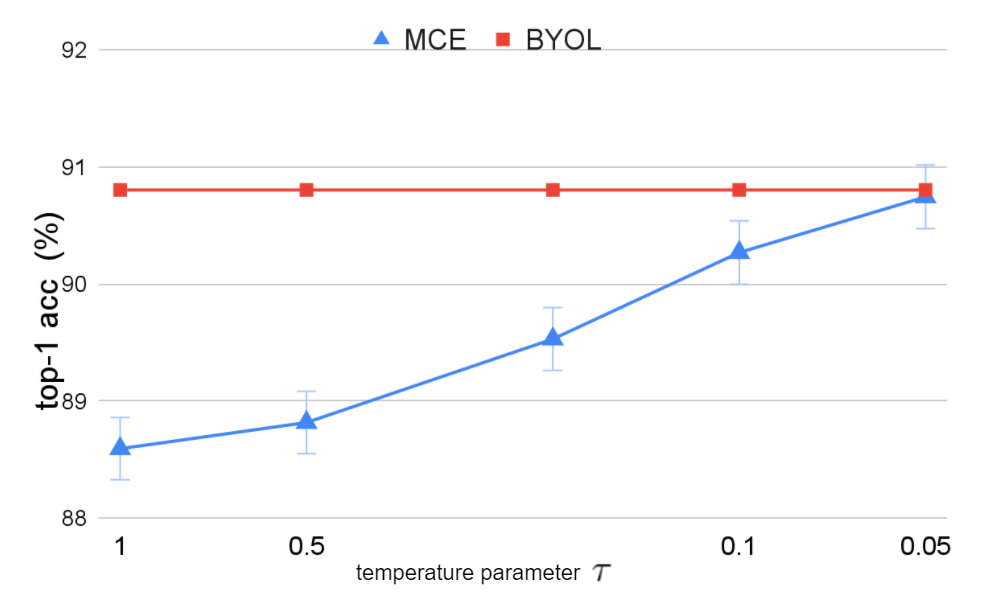}
	\caption{\textbf{Experiments of different $\tau$ in $L_{mce}$.} The top-1 accuracy is obtained by linear evaluation in Imagenette~\cite{imagenette}. The hyper-parameters are consistent with BYOL.}
	\label{fig:temperature_ablation}
\end{figure}
\subsection{Relation between $L_{ce}$ and $L_{similarity}$}
\label{sec:relation section}
Cross-entropy loss is essential for capturing the similarity between two distinct images. However, most clustering-based methods that use $L_{ce}$ may suffer from the poor quality of pseudo-labels. The scale of the dataset may also influence those methods. The hyper-parameters such as the dimensionality of output features may be sensitive. By contrast, contrastive learning based methods may be less affected by this problem. For example, SimSiam works well when the output dimensionality is 2048 in CIFAR-10. However, SwAV works worse when the number of prototypes is 2048 in CIFAR-10. In SimSiam, authors notice that directly replacing $L_{similarity}$ with 
\begin{equation}
	L_{ce}=-\sum_i p(i|z_2, \tau)\log{p(i|q_1, \tau)}
	\label{equ:simsiam_celoss}
\end{equation} may decrease the performance. In SimSiam, they do not use $\tau$. Thus $\tau=1$ here.

To discover the relation between $L_{similarity}$ and $L_{ce}$, we first analyze gradients for $L_{similarity}$ in (\ref{equ:loss_contrastive}). For two vectors $q_1$ and $z_2$ $\in \mathbb{R}^{C}$originated from the same image, the gradients for $q_1$ is
\begin{align}
	\frac{\partial{L}}{\partial{q_1}} = \frac{1}{\|q_1\|}(\frac{\partial{L}}{\partial{\Q1}} - \Q1 \cdot \langle \Q1 , \frac{\partial{L}}{\partial{\Q1}} \rangle),
\end{align}
\begin{align}
	\|\frac{\partial{L}}{\partial{q_1}}\|^2 &= \frac{1}{\|q_1\|^2}(1 - \langle \Q1, \tilde{z}_2 \rangle^2) \leq \frac{1}{\|q_1\|^2},
    \label{equ:q12}
\end{align}
when $L$ is short for $L_{similarity}$. $\|q_1\|^2 \approx C$ at the beginning of the training if we use suitable initializer~\cite{he2015delving}. The gradients for $q_1$ will be bounded.

Then we analyze the cross-entropy loss used in SimSiam. For (\ref{equ:simsiam_celoss}), the gradients for $q_1$ is
\begin{equation}
	\frac{\partial{L_{ce}}}{\partial{q_1}} = \frac{1}{\tau} (\mathbf{P}(q_1) - \mathbf{P}(z_2)), and \; \|\frac{\partial{L_{ce}}}{\partial{q_1}}\|^2 \leq \frac{2}{\tau ^ 2}.
	\label{equ:grad_of_raw_ce}
\end{equation}
Obviously, gradients for $q_1$ in $L_{ce}$ are only influenced by the distance between probability distributions $\mathbf{P}(q_1)$ and $\mathbf{P}(z_2)$. Thus the update may not be under control. Although $L_{ce}$ and $L_{similarity}$ both seem to increase the agreement between two views, these two losses cannot be associated as shown in Fig.~\ref{fig:correlation}.
\begin{figure}[!t]
	\centering
	\includegraphics[width=2.5in]{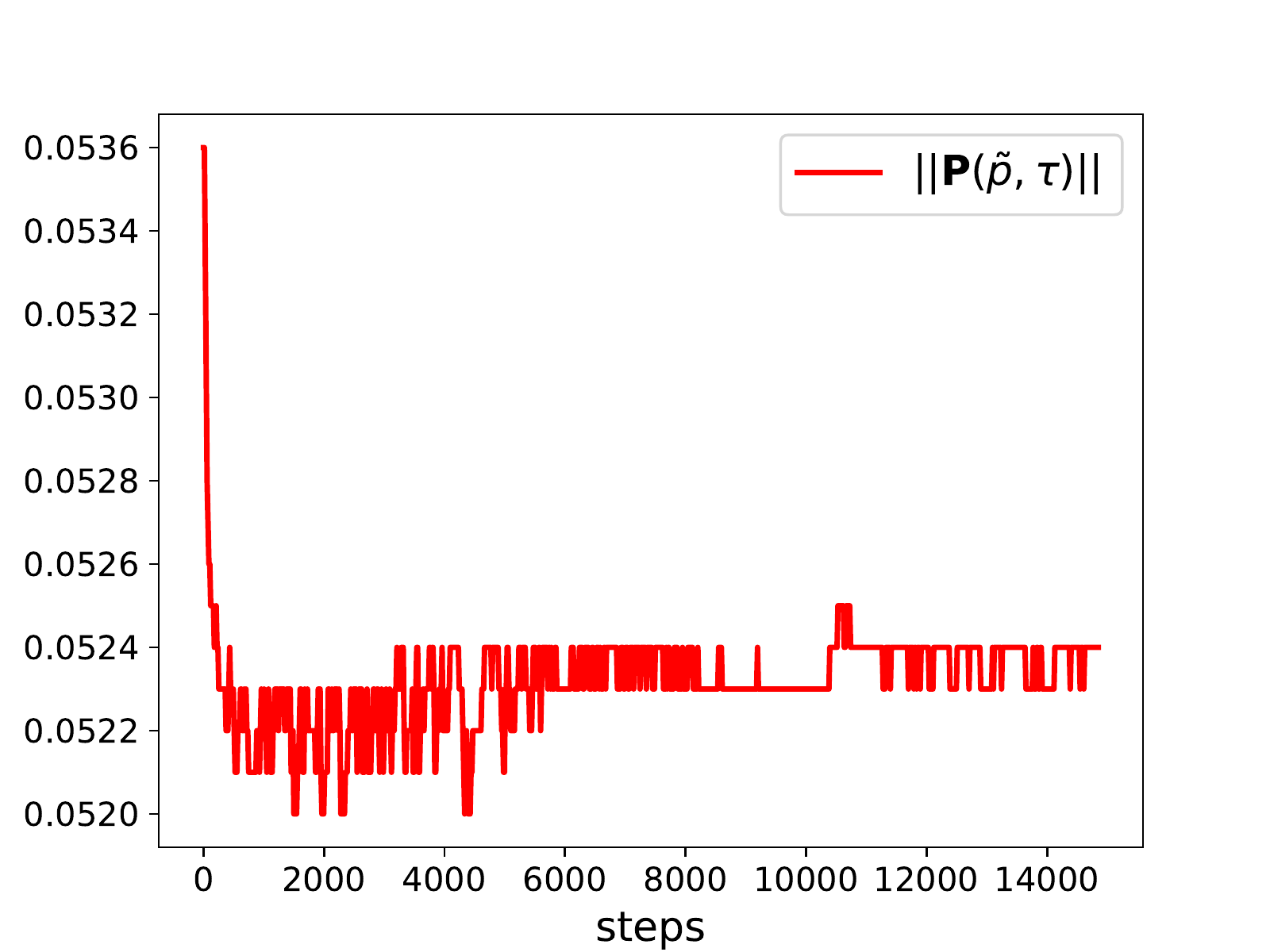}
	\caption{\textbf{$\|\mathbf{P}(z_2)\|$ during the training.} The value of $\|\mathbf{P}(z_2)\|$ is close to the best setting of $\tau$ in Fig.~\ref{fig:temperature_ablation}. In ~(\ref{equ:l2norm}), $\|\mathbf{P}(z_2)\|^2 / \tau^2$ directly influence the magnitude of gradients for $L_{ce}$. The measurement of $\|\mathbf{P}(z_2)\|$ can help choose the hyper-parameters of $\tau$.}
	\label{fig:pz2}
\end{figure}
\begin{figure}[!t]
	\centering
	\includegraphics[width=2.5in]{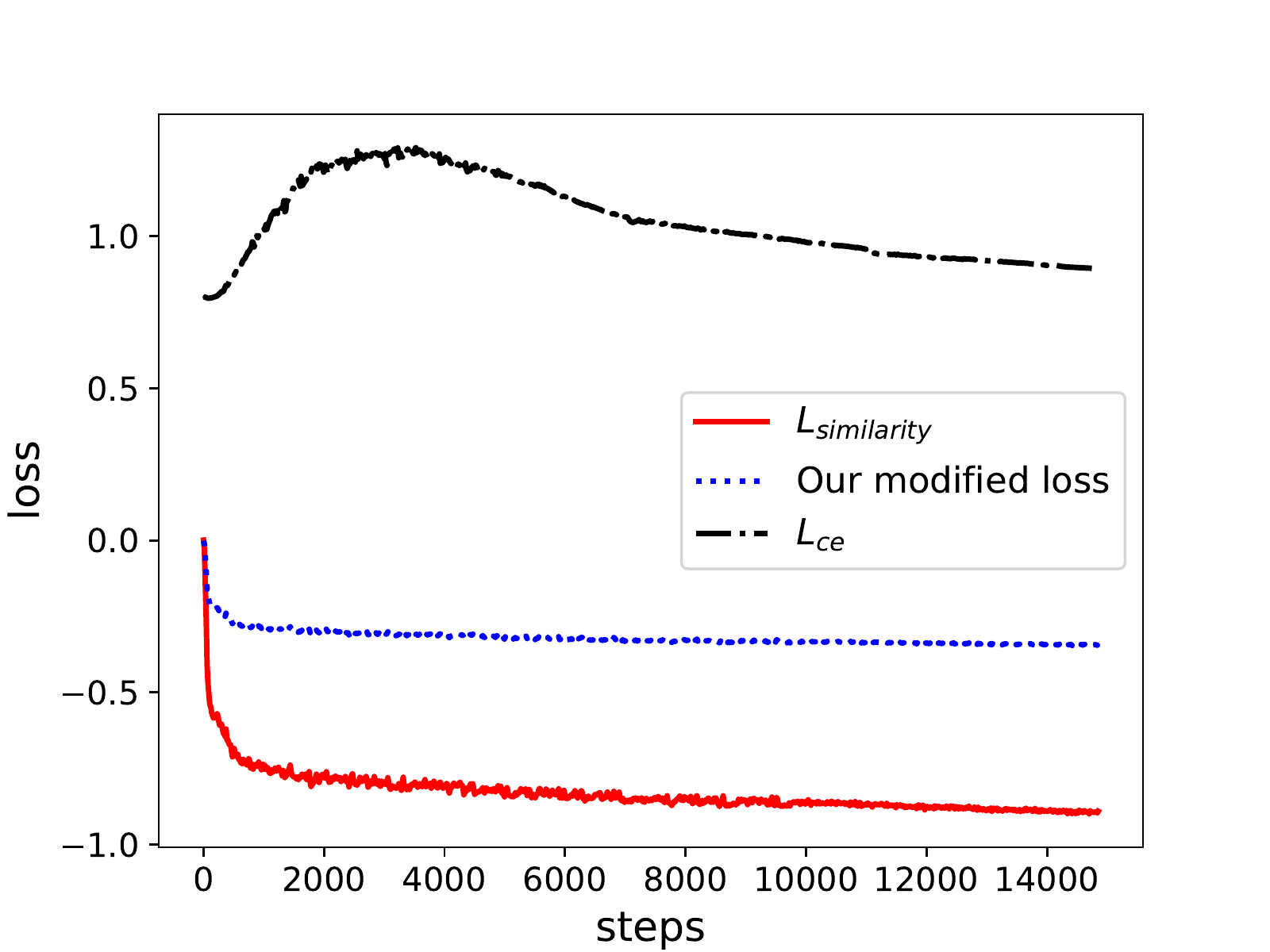}
	\caption{\textbf{The correlation between $L_{similarity}$ and our loss.} The training procedure just minimizes $L_{similarity}$. Other terms (blue curve and black curve) are just recorded to show the correlation with $L_{similarity}$. Our loss is subtracted by a constant $\log{C}$ so that curves can be displayed clearly.}
	\label{fig:correlation}
\end{figure}

Notice that $L_{ce}$ can be expressed by
\begin{equation}
	L_{ce} = (\log{\sum_j \exp{\frac{q_1^{(j)}}{\tau}}}) - (\sum_i p(i|z_2, \tau) \frac{q_1^{(i)}}{\tau}).
	\label{equ:uniformity_and_alignment}
\end{equation}
This equation is similar to the uniformity and alignment term in~\cite{DBLP:conf/icml/0001I20}. The loss is analogous to similarity loss if we ignore the first term. To imitate similarity loss, a modified cross-entropy (MCE) loss can be  expressed by
\begin{equation}
	L_{mce} = - \sum_i p(i|z_2, \tau) \frac{\Q1^{(i)}}{\tau}.
\end{equation}
Instead of using $q_1$ as the input of $softmax$, we use $\Q1$ as the input. This simple modification can lead to
\begin{align}
	\|\frac{\partial{L_{mce}}}{\partial{q_1}}\|^2 &= \frac{1}{\|q_1\|^2}(\|\frac{\partial{L_{mce}}}{\partial{\Q1}}\|^2 - \langle \Q1 , \frac{\partial{L_{mce}}}{\partial{\Q1}} \rangle^2) \nonumber\\
	&= \frac{\|\mathbf{P}(z_2)\|^2}{\tau^2\|q_1\|^2}(1 - \langle \Q1 , \frac{\mathbf{P}(z_2)}{\|\mathbf{P}(z_2)\|} \rangle^2).
	\label{equ:l2norm}
\end{align}
Unlike the gradients for similarity loss, $\|\nabla{q_1} L_{mce}\| \leq \|\mathbf{P}(z_2)\| / \tau$ may lead to smaller gradients if $\|\mathbf{P}(z_2)\|$ is small (e.g., uniform distribution). Therefore, $\tau$ is proved to be crucial for the magnitude of gradients. Fig.~\ref{fig:temperature_ablation} shows the results for different $\tau$. The experimental results convince us that the connection between $L_{similarity}$ and $L_{mce}$ can be established through a suitable $\tau$. According to Fig.~\ref{fig:pz2}, Fig.~\ref{fig:temperature_ablation}, and EQ. (\ref{equ:l2norm}), we can find the range of $\|\mathbf{P}(z_2)\|$ is close to the best settings of $\tau$ in $L_{mce}$. From the perspective of gradients, the increase of $\tau$ for centering mechanism in DINO can be explained. Increasing $\tau$ in $\mathbf{P}(z_2)$ will decrease $\|\mathbf{P}(z_2)\|$, thus the parameters may converge better.

\begin{table*}[!t]
    \caption{The comparison of different loss functions. We provide a detailed comparison of analyzed methods. $L_{align}$ indicates the gradient for the alignment term. For convenience, we provide the gradient magnitude of the alignment term.}
    \label{tab:different_methods}
    \centering
    \begin{tabular}{c|cccc}
        \toprule
        Loss Functions & Alignment Term & Uniformity Term & Gradient Magnitude ($\|\frac{\partial L^{align}}{\partial q_1}\|^2$) & Upper Bound of Gradient Magnitude\\
        \midrule
        $L_{similarity}$ & $-\INNERDOT{\Q1, \Z2}$ & - & $(1 - \langle \Q1, \tilde{z}_2 \rangle^2) / \|q_1\|^2$ & $1 / \|q_1\|^2$\\
        $L_{contrastive}$ & $-\INNERDOT{\Q1, \Z2} / \tau$ & $\log{\sum_{i=0}^{|B|} \exp{\INNERDOT{q, k_i} / \tau}}$ & $(1 - \langle \Q1, \tilde{z}_2 \rangle^2) / \tau^2\|q_1\|^2$ & $1 / \tau^2\|q_1\|^2$\\
        $L_{ce}$ & $-\INNERDOT{q_1, \mathbf{P}(z_2)} / \tau$ & $\log{\sum_j \exp{q_1^{(j)} / \tau}}$ & $\|\mathbf{P}(z_2)\|^2 / \tau^2$ & $1 /\tau^2$ \\
        $L_{mce}$ & $-\INNERDOT{\Q1, \mathbf{P}(z_2)} / \tau$ & - & $(1 - \langle \Q1 , \frac{\mathbf{P}(z_2)}{\|\mathbf{P}(z_2)\|} \rangle^2) \cdot \|\mathbf{P}(z_2)\|^2 / \tau^2\|q_1\|^2$ & $\|\mathbf{P}(z_2)\|^2 / \tau^2\|q_1\|^2$ \\
        Ours ($L_{iccl}$) & $-\INNERDOT{\Q1, \mathbf{P}(z_2)} / \tau_1$ & $\sum_{i} q(i) log \frac{q(i)}{p(i|\theta)}$ & $(1 - \langle \Q1 , \frac{\mathbf{P}(z_2)}{\|\mathbf{P}(z_2)\|} \rangle^2) \cdot \|\mathbf{P}(z_2)\|^2 / \tau_1^2\|q_1\|^2$ & $\|\mathbf{P}(z_2)\|^2 / \tau_1^2\|q_1\|^2 \approx 1 / \|q_1\|^2$ \\
    \end{tabular}
\end{table*}

\subsection{Detailed Method}
$L_{ce}$ may capture class-level information and $L_{similarity}$ may capture instance-level information. Based on the aforementioned analyses, we propose a simple method to leverage the relation between $L_{ce}$ and $L_{similarity}$.

There are two $\tau$ in (\ref{equ:simsiam_celoss}). We denote $\tau$ for $q_1$ as $\tau_1$ and $\tau$ for $z_2$ as $\tau_2$. $\tau_1$ and $\tau_2$ have completely distinct roles on gradients. In essence, $\tau$ in (\ref{equ:l2norm}) is $\tau_1$, which directly influences the magnitude of $\|\frac{\partial{L_{mce}}}{\partial{q_1}}\|^2$. $\tau_2$ adjusts the magnitude of $\|\mathbf{P(z_2)}\|$. $\tau_1$ and $\Q1$ may affect the gradients of $q_1$, which is essential to construct the relation between $L_{ce}$ and $L_{similarity}$. The above analysis explains why we can set different values for $\tau_1$ and $\tau_2$. As (\ref{equ:l2norm}) shows, 
a basic setting of $\tau_1$ to be adaptive is $\tau_1 = \|\mathbf{P}(z_2)\|$. We also provide some detailed analysis for the setting of adaptive $\tau_1$ in the appendix. Moreover, the difference between (\ref{equ:l2norm}) and (\ref{equ:grad_of_raw_ce}) indicates that the $l_2$-norm of $p$ is essential for getting suitable gradients. Therefore, the loss function is
\begin{align}
	L_{iccl}&=-\sum_i p(i|z_2, \tau_2)\log{p(i|\frac{q_1}{\|q_1\|}, \tau_{1})} \nonumber \\
	&=(\log{\sum_j \exp{\frac{\Q1^{(j)}}{\tau_{1}}}}) - (\sum_i p(i|z_2, \tau_2) \frac{\Q1^{(i)}}{\tau_{1}}).
	\label{equ:iccloss}
\end{align}

This loss function is similar to the loss in DINO and SwAV. However, the inputs of the loss in DINO and SwAV are not $l_2$-normalized. Moreover, $\tau_1$ for $\Q1$ is used to control the magnitude of $\|\nabla{q_1} L_{mce}\|$ in this formula. In SwAV and DINO, $\tau_1$ may be used to generate a basic probability distribution. As Fig.~\ref{fig:correlation} shows, our loss function can be associated with $L_{similarity}$. This property may prevent our loss from unbalanced clustering and provide more reasonable training. For instance, we can use $L_{similarity}$ at the beginning of the training and use $L_{iccl}$ when the network can basically extract instance-level features.

The relation between $L_{ce}$ and $L_{similarity}$ helps networks benefit from both instance-level information and class-level information. The derived method is less affected by hyper-parameters and balancing mechanisms (e.g., Sinkhorn-Knopp algorithm), which is different from certain clustering-based methods. Our method can be simply understood as focusing on the dimensionality of larger values to provide cross-instance supervision.

\subsection{The Uniformity for probabilities}
In SwAV and DINO, the supervised labels $\mathbf{P}(z_2)$ are generated through balancing mechanisms, such as Sinkhorn-Knopp algorithm and moving average centering, to prevent unbalanced clustering. Traditional cross-entropy loss may not prevent trivial solution. Based on the relation between $L_{similarity}$ and $L_{iccl}$, our method can be less suffered from unbalanced clustering. In our method, we just add some regularization for the loss function. We assume that the outputs of the batch data $B$ is $P=\{q_1,...,p_N\}$. The uniformity assumption indicates that the outputs in each dimension should be approximately close, which can be expressed by
\begin{align}
	p(i|\theta) = \frac{1}{|B|} \sum_{j=1}^{|B|} p(i|p_j, \tau, \theta), \nonumber\\
	\mathop{\min}_{\theta} D_{KL}(P||Q) = \sum_{i} q(i) log \frac{q(i)}{p(i|\theta)},
	\label{equ:uniformity}
\end{align}
where $q(i) = \frac{1}{C}$ represents uniform distribution. $\theta$ represents the parameters. $P$ and $Q$ are denoted as the estimated probability distribution and expected probability distribution, respectively. We use KL-divergence between $P$ and $Q$ in (\ref{equ:uniformity}). The bias of representations will be used to update the network parameters. $\lambda_r$ is used to adjust the strength of the uniformity regularization. The final loss is
\begin{equation}
	L_{final} = L_{iccl} + \lambda_r \times (\sum_{i} q(i) log \frac{q(i)}{p(i|\theta)}).
\end{equation}
By default, half of the training use $L_{similarity}$ to maintain instance-level information, and half of the training use $L_{final}$ to maintain class-level information.

\subsection{The Details among Different Losses}
Table~\ref{tab:different_methods} shows the relation among different loss functions. For convenience, the gradient magnitude of the alignment term is provided. The alignment term indicates how this method learns the information between two correlated representations. We also provide the upper bound of the corresponding gradient magnitude for each loss function. Table~\ref{tab:different_methods} clarifies the relation of our method with other loss functions. $L_{similarity}$ has a smaller upper bound than $L_{ce}$ due to $\tau \ll \|q_1\|$ (which has been analyzed in Sec~\ref{sec:relation section}). $L_{similarity}$ is similar to our method in the upper bound of gradient magnitude, which is the core difference between our method and $L_{ce}$. This point makes $L_{similarity}$ be replaceable with our method. Moreover, the alignment term of our method and $L_{ce}$ are similar, indicating that our method may leverage the cross-entropy loss to learn the similarity between intra-class instances. Therefore, our method can use $L_{similarity}$ at the beginning of the training and replace $L_{similarity}$ with $L_{iccl}$ to learn intra-class information after several epochs.

\section{Experiments}
In this section, we conduct a series of experiments on model designs for self-supervised representation learning.

\subsection{Baseline Settings}
Our method can be easily combined with BYOL and SimSiam. We follow the BYOL settings as our baseline. Specifically, the default temperature $\tau_2$ equals 0.07 for all datasets. $\tau_1$ is 0.1 as the default. We use a cosine decay learning rate schedule~\cite{loshchilov2016sgdr} for all experiments. All augmentation strategies and initialization methods are the same as BYOL. For ResNet~\cite{he2016identity}, we initialize the scale parameters as 0~\cite{goyal2017accurate} in the last Batch Normalization (BN)~\cite{DBLP:conf/icml/IoffeS15} layer for every residual block. All our models are trained by mixed-precision to accelerate training speed. The augmentation strategies and initialization for each method are consistent to make a fair comparison. The detail of augmentation and initialization can be found in the appendix.

\subsubsection{Imagenette settings}
We use Imagenette~\cite{imagenette} to conduct basic experiments. Following BYOL's settings, we use LARS~\cite{you2017large} with base learning rate ($lr$) = 2.0 for 1000 epochs, weight decay = 1e-6, momentum = 0.9, and batch size = 256.
According to ~\cite{SimSiam}, the $lr$ is $(\text{base}\;lr) \times \frac{BatchSize}{256}$. The backbone is ResNet-18. The projector is a 3-layer multi-layer perceptron (MLP), and the predictor is a 2-layer MLP. The output dimensionality is 512. We do not use momentum encoder here.

\subsubsection{ImageNet settings}
We use ImageNet~\cite{krizhevsky2012imagenet} to verify our representations. We use LARS with base $lr$ = 0.3, weight decay = 1e-6, momentum = 0.9, $\lambda_r = 5$, and batch size = 1024. The backbone is ResNet-50. The projector is a 3-layer MLP with output dimensionality 256. The predictor is a 2-layer MLP with output dimensionality 256. The momentum for momentum encoder is 0.99.

\subsubsection{Linear evaluation}
Given the pre-trained network, we train a supervised linear classifier on frozen features (after average pooling from ResNet). For Imagenette, the classifier uses base $lr$ = 0.2, weight decay = 0, momentum = 0.9, epoch = 100, and batch size = 4096. The optimizer is SGD with Nesterov. For ImageNet, we follow settings in ~\cite{barlowtwins}. The linear classifier training uses base lr = 0.3 with a cosine decay schedule for 100 epochs, weight decay = 1e-6, momentum = 0.9, batch size = 256 with SGD optimizer.

\begin{figure}[!t]
	\centering
	\includegraphics[width=3in]{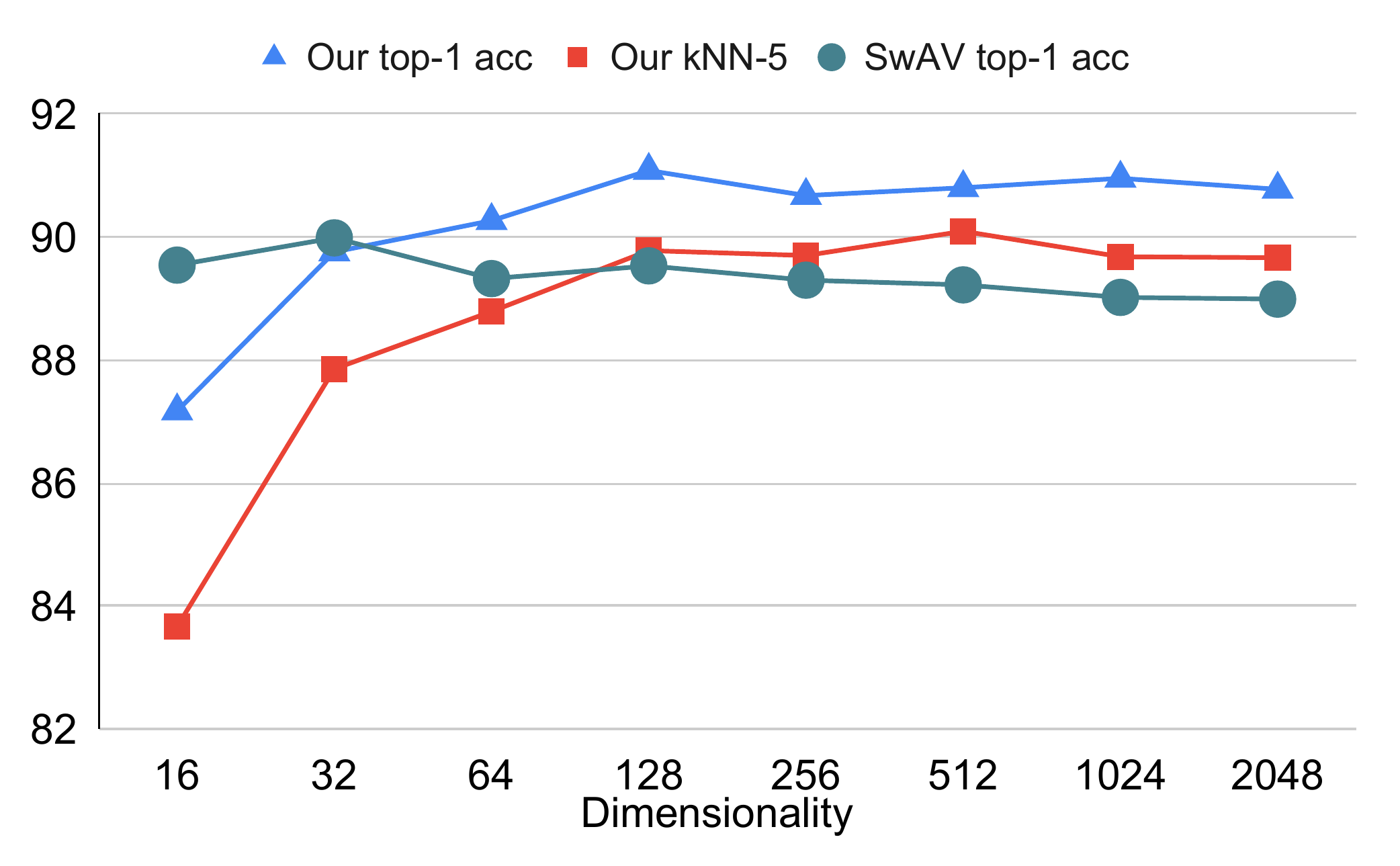}
	\caption{\textbf{Results versus output dimensionality.} kNN-5 denotes the result of $K$-Nearest Neighbor when $K$ is 5.}
	\label{fig:dimension}
\end{figure}

\subsection{Analysis of Hyper-Parameters}
\subsubsection{Hyper-parameter $\lambda_r$}
In our method, we use $\lambda_r$ to regularize the uniformity of $q_1$. Tab.~\ref{tab:tau_and_ratio} shows the results for different $\lambda_r$. Our method may be less affected by incorrect clustering due to the correlation with $L_{similarity}$. The instance-level learning reduces the dependence on gradients generated by clustering. Compared with SwAV and DINO, there is no need for our approach to impose any balancing mechanism. Our approach works well when $\lambda_r = 0$. By contrast, SwAV relies on Sinkhorn-Knopp algorithm and DINO relies on centering.

\begin{table}[!t]
	\centering
    \caption{\textbf{The analyses of hyper-parameters $\lambda_r$ and $\tau_2$.} The results are top-1 accuracy (\%) in Imagenette, which is the average over three independent experiments.}
	\label{tab:tau_and_ratio}
		\begin{tabular}{c|ccccccc}
			\toprule
			& \multicolumn{7}{c}{$\lambda_r$} \\
			$\tau_2$ & $0$ &$0.5$ &$1.0$ &$2.5$ &$5.0$ &$7.5$ &$10.0$\\
			\midrule
			0.05&90.77 & 90.90 & 91.04 & 90.91 & 90.95 & 90.72 & 90.76\\
			0.07&91.18 & 91.15 & 91.21 & 91.14 & 91.08 & 90.71 & 90.71\\
			0.1&90.74 & 90.93& 91.02& 90.89& 91.05& 90.79& 90.56\\
		\end{tabular}
\end{table}

\subsubsection{Hyper-parameter $\tau_2$}
Fig.~\ref{fig:temperature_ablation} shows the results for different $\tau$. As the $\tau$ becomes smaller, the performance becomes better. This phenomenon is consistent with $(\ref{equ:l2norm})$. An inappropriate $\tau$ will magnify or diminish the magnitude of $\|\nabla{q_1} L\|$. Tab.~\ref{tab:tau_and_ratio} analyzes the influence of $\tau_2$. We find that $\tau_2=0.07$ can provide a better result in this dataset. In fact, $\tau_2$ only has an impact on $\|\mathbf{P}(z_2)\|$. $\|\mathbf{P}(z_2)\|$ will only be influenced by the dimensionality of features and $\tau_2$. Therefore, $\tau_2=0.07$ may be extended to other datasets.

\subsubsection{Hyper-parameter of output dimensionality}
Fig.~\ref{fig:dimension} shows the results of different dimensionalities. Our method is stable. Although the dataset only has 10 categories. Our method has a good result when the dimensionality is large. By contrast, we find certain clustering-based methods cannot get a good result when the number of prototypes is excessive in Imagenette. This point indicates that the relation between $L_{similarity}$ and $L_{iccl}$ helps decrease the influence of incorrect clustering.

\subsubsection{Hyper-parameter of batch size}
Fig.~\ref{fig:batchsize} shows the results of different batch sizes. We follow BYOL to accumulate the gradients of $N$ steps so that the learning rate is not changed. As we conjecture, the performance of our method is less influenced by batch size, which is similar to contrastive learning based methods.

\begin{figure}[!t]
	\centering
	\includegraphics[width=0.9\linewidth]{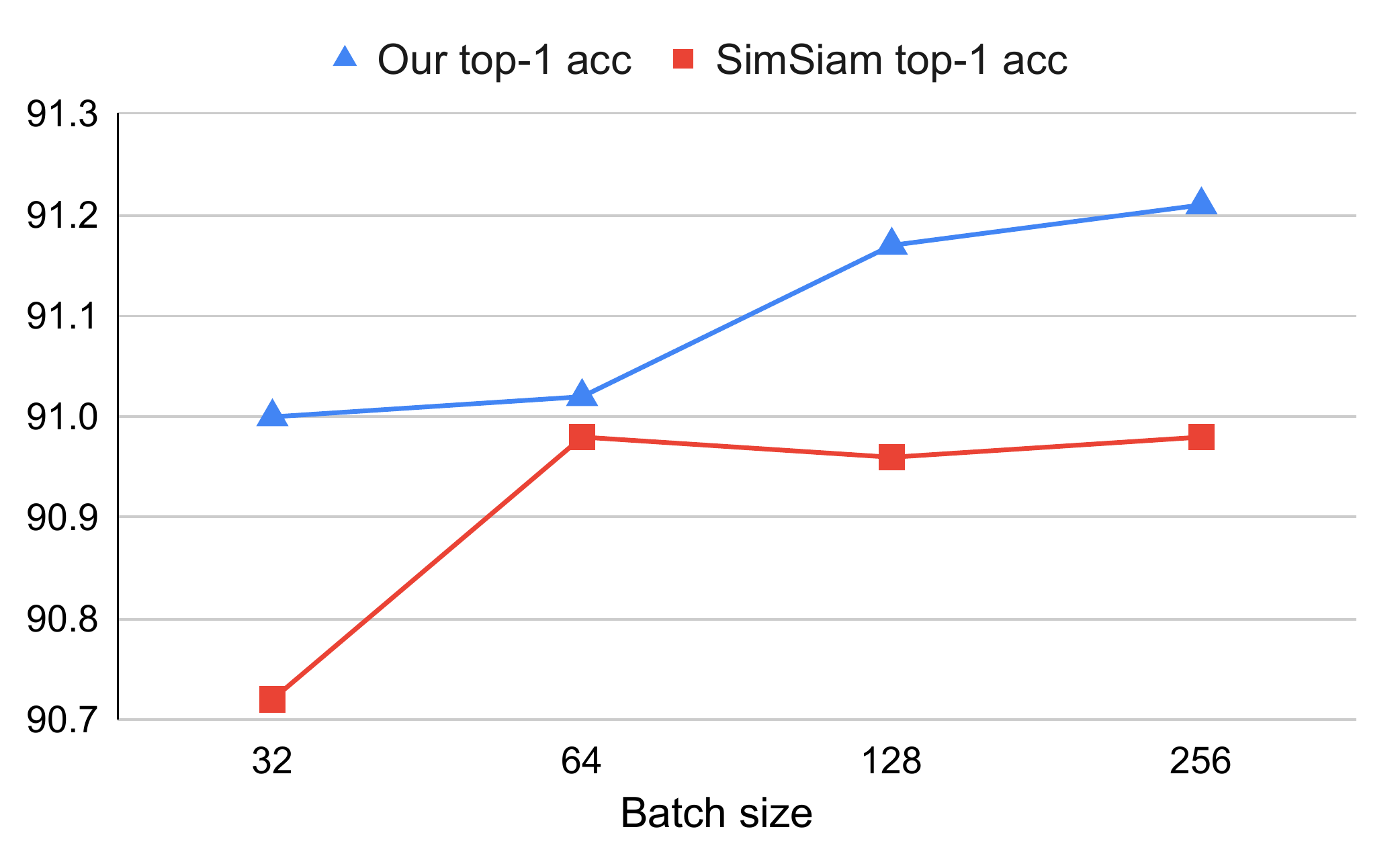}
	\caption{\textbf{Experiments of different batch sizes.}}
	\label{fig:batchsize}
\end{figure}

\begin{table}[!t]
	\centering
    \caption{\textbf{The ablation study of $f(i, z_2)$ in (\ref{equ:traditional_ce}).} SK is short for Sinkhorn-Knopp algorithm. Centering indicates the centering procedure in DINO. Here we just change $f(i,z_2)$. Thus we use $\tilde{p}$ rather than $p$ as the input of $Softmax$.}
	\label{tab:ablation_f}
	\resizebox{.9\columnwidth}{!}{
		\begin{tabular}{cccc}
			\toprule
			SK & Centering& $p(i|z_2, \tau = \tau_1)$ & $p(i|z_2, \tau = \tau_2 < \tau_1)$\\
			\midrule
			90.85 & 90.49 &90.74 & 91.21\\
		\end{tabular}
	}
\end{table}

\begin{table*}[!t]
	\centering
	\caption{\textbf{The results of different self-supervised methods in ImageNet.} All results are pretrained with two 224$\times$224 views. All methods use ResNet-50 as backbone. We do not use any tricks (e.g., multi-crop in SwAV and fixing lr in SimSiam). * represents the result of 400 epochs. $\ddagger$ denotes the result with Multi-Crop (this trick may boost the result). Top-4 best self-supervised methods are \underline{underlined}.}
	\label{tab:imagenet_result}
	\resizebox{.8\linewidth}{!}{
		\begin{tabular}{cccc|cc}
			\toprule
			&&&&\multicolumn{2}{c}{Top-1 Acc (\%)}\\
			Method & Basic Loss & Batch Size & Dimensionality& 100 epochs & 300 epochs \\
			\midrule
			\multicolumn{5}{l}{\textbf{Contrastive Methods}} \\
			\midrule
			SimCLR~\cite{SimCLR} & InfoNCE & 4096 & 2048 & 66.5 & *69.8\\
			MoCo v2~\cite{mocov2} & InfoNCE & 256 (65536 queue size) & 256 & \underline{67.4} & *71.1\\
			MoCo v3~\cite{MoCov3} & InfoNCE & 1024 & 256 & \underline{68.1} & \underline{\textbf{72.3}}\\
			BYOL~\cite{BYOL} & $L_{similarity}$ & 1024 & 256 & 66.0 & \underline{72.2}\\
			SimSiam~\cite{SimSiam} & $L_{similarity}$ & 256 & 2048 & 67.3 & *70.8\\
			& $L_{ce}$ & 256 & 2048 & 63.2 & - \\
			\midrule
			BarlowTwins~\cite{barlowtwins} & BarlowTwins Loss &  1024 & 8192 & 67.4 & 71.4\\
			\midrule
			\multicolumn{5}{l}{\textbf{Clustering-based Methods}} \\
			\midrule
			SeLa~\cite{DBLP:conf/iclr/AsanoRV20a} & $L_{ce}$ & 4096 & 3000 & 61.5 & *67.2\\ 
			SwAV~\cite{SwAV} & $L_{ce}$ & 256 (4096 queue size) & 3000 & 66.5 & *70.7\\
			DINO~\cite{dino} & $L_{ce}$ & 1024 & 65536 & \underline{67.8} & \underline{$\ddagger$72.1}\\
			\midrule
			Ours & $L_{ce}$ & 1024 & 256 & \underline{\textbf{68.2}} & \underline{71.7} \\
			Ours & $L_{ce}$ & 1024 & 512 & \underline{68.1} & \underline{71.5} 
		\end{tabular}
	}
\end{table*}

\begin{table}[!t]
	\centering
    \caption{\textbf{The results of different methods in Imagenette.} $\dag$ indicates the methods where default settings cannot be transferred to Imagenette experiments, and we report the best result. $\ddag$ indicates the method without uniformity regularization (Sinkhorn-Knopp in SwAV, centering in DINO, and $\lambda_r$ in our method).}
	\label{tab:imagenette_result}
	\resizebox{.9\columnwidth}{!}{
		\begin{tabular}{c|c|c}
			\toprule
			Method & Top-1 Acc (\%) & KNN-5 Top-1 Acc (\%) \\
			\midrule
			\multicolumn{3}{l}{\textbf{Contrastive Methods}} \\
			\midrule
			SimSiam & \textbf{90.98}  & 89.90 \\
			MoCo & 90.39  & 88.81 \\
			BYOL & {90.87} & \textbf{90.26 }\\
			\midrule
			BarlowTwins$\dag$ & 87.06 & 87.31 \\
			\midrule
			\multicolumn{3}{l}{\textbf{Clustering-based Methods}} \\
			\midrule
			SwAV$\dag$ & 89.99 & 88.61 \\
			DINO$\dag$ & 88.86 & 88.41 \\
			$\ddag$SwAV & fail & -\\
			$\ddag$DINO & fail & -\\
			\midrule
			$\ddag$Ours & 91.18  & 89.98\\
			Ours (fixed $\tau_1$) & 91.21  & \textbf{90.31 }\\
			Ours (adaptive $\tau_1$) & \textbf{91.23 } & 89.94 \\
		\end{tabular}
	}
\end{table}

\subsection{Ablations on Loss Function}
First, as Fig.~\ref{fig:correlation} shows, $\|p\|$ is essential to build the relation with $L_{similarity}$. The main difference between our loss and $L_{ce}$ is the input of $Softmax$. Based on the analyses of gradients, $\|p\|$ is a more suitable form to feed into $Softmax$. This point of view is pivotal to our method.

We choose different methods to generate probability distribution to supervise the update of $q_1$. Tab.~\ref{tab:ablation_f} shows results. Sinkhorn-Knopp algorithm and centering adjust the probability distribution based on the batch of data. The magnitude of $\|\mathbf{P}(z_2)\|$ may be small, although a smaller $\tau_2$ is used. However, our method does not have a centering mechanism, which may lead to a large $\|\mathbf{P}(z_2)\|$. Moreover, this readjustment may disturb the pseudo-labels and confuse the training. Then in the case when $\tau_2 = \tau_1$, the supervised probability may not be sharp, which loses the ability to capture the similarity between distinct images.

\subsection{Comparison of Other Methods}
We first compare our method with other self-supervised methods in Imagenette. Tab.~\ref{tab:imagenette_result} shows the results for different self-supervised methods. We find that SimSiam, MoCo, and BYOL can be easily extended to this dataset, indicating that those methods are robust to different datasets. BarlowTwins focuses on the correlation of different channels. We find this method is similar to those clustering-based methods. Large dimensionality is not suitable for this dataset. SwAV and DINO are clustering-based methods. The hyper-parameters are set for ImageNet. We find those hyper-parameters are less useful in this dataset. We change the number of prototypes and choose the best result. However, the performances are still worse than results of contrastive learning based methods. SwAV and DINO both heavily rely on uniformity regularization. In ImageNet and Imagenette, those methods may fail without uniformity regularization. Our method leverages the instance-level information and class-level information through (\ref{equ:iccloss}). Therefore, our method may be less suffered from the problem of incorrect clustering. Furthermore, $\tau_1$ and $\tau_2$ are set manually in SwAV and DINO. In our method, adaptive $\tau_1$ can produce a competitive result. This point confirms (\ref{equ:l2norm}) and the analyses of gradients.

Based on the hyper-parameter in Imagenette, we conduct the experiments in ImageNet. We find these hyper-parameters can still work well in ImageNet. Tab.~\ref{tab:imagenet_result} shows the results in ImageNet. We analyze results from the perspectives of loss function and dimensionality. All backbones are ResNet-50. The setting of $\tau_1$ is fixed as 0.1. This may be the problem of hyper-parameters, and we will find suitable hyper-parameters in the future.

\subsubsection{Loss function}
Traditional contrastive learning based methods use similarity loss. For SimSiam, authors find replacing $L_{similarity}$ with $L_{ce}$ may lead to an inferior result (67.3 and 63.2). The results of those contrastive learning based methods may be less influenced by dimensionality. However, methods that use InfoNCE~\cite{oord2018representation} may rely on large batch size. MoCo requires a large queue size or large batch size to maintain a good result. BYOL is stable in both ImageNet and Imagenette. In general, similarity loss may be a good manner to capture instance-level information. BarlowTwins attempts to leverage the correlation of different dimensionalities. This method may benefit from large dimensionality. When the output dimensionality is 8192, Barlowtwins has a competitive result. However, the performance may decrease a lot when the output dimensionality becomes smaller~\cite{barlowtwins}. Thus, the hyper-parameter may be suitable for ImageNet but not suitable for other datasets. SwAV and DINO both leverage $L_{ce}$ to do online clustering. As we propose in Prop.~\ref{prop:similarity}, $L_{ce}$ may help to correlate different instances. Our method establishes the relation between $L_{similarity}$ and $L_{ce}$. Therefore, our method also captures class-level information. From this perspective, our method may leverage more information than other methods. When class-level information is hard to capture, the method may use instance-level information to provide robust training.
\subsubsection{Dimensionality}
In experiments, we find those clustering-based methods may be sensitive to the scale of the dataset. The hyper-parameter of the number of prototypes for those methods may not act well in tiny datasets. In ImageNet, those methods still need a large dimensionality. The decrease in the number of prototypes or the data diversity may affect those methods. However, as Fig.~\ref{fig:dimension} and Tab.~\ref{tab:imagenet_result} show, our method is similar to those contrastive learning based methods. When the category is 10, our method can get a competitive result with dimensionality 2048. When the category is 1000, our method can also get a competitive result with dimensionality 256. Moreover, the use of $L_{ce}$ helps to discover the similarity between distinct images. On the contrary, SimSiam gets an inferior result with $L_{ce}$. The established relation between $L_{ce}$ and $L_{similarity}$ boosts the learned information.

\begin{table*}[!t]
	\centering
    \caption{\textbf{Transfer Learning.} VOC 07 det: Faster R-CNN~\cite{ren2015faster} fine-tuned in VOC 2007 trainval, evaluated in VOC 2007 test. VOC 07+12 det: Faster R-CNN fine-tuned in VOC 2007 trainval + 2012 trainval, evaluated in VOC 2007 test. Methods that use ViT~\cite{dosovitskiy2020image} as backbone may not be compatible with Faster R-CNN. We also provide the transfer learning in COCO~\cite{coco} in appendix.}
	\label{tab:object_detection}
	\resizebox{0.5\linewidth}{!}{
		\begin{tabular}{c|cccccc}
			\toprule
			Method & \multicolumn{3}{c}{VOC 07 det} & \multicolumn{3}{c}{VOC 07+12 det} \\
			\cline{2-4} \cline{5-7}
			& $AP_{all}$ & $AP_{50}$ & $AP_{75}$&$AP_{all}$ & $AP_{50}$ & $AP_{75}$ \\
			\midrule
			Supervised &42.4 & 74.4 & 42.7 & 53.5 & 81.3 & 58.5 \\
			\midrule
			SimCLR & 46.8 & 75.9 & 50.1 & 55.5 & 81.8 & 61.4 \\
			MoCo v2 & \textbf{48.5}& 77.1 & \textbf{52.5} & \textbf{57.0} & 82.5 & 63.3 \\
			BYOL & 47.0 & 77.1 & 49.9& 55.3 & 81.4 & 61.1 \\
			SwAV &46.5 & 75.5& 49.6 & 55.4 & 81.5 & 61.4 \\
			SimSiam (optimal) & \textbf{48.5} & \textbf{77.3} & \textbf{52.5}& \textbf{57.0} & 82.4 & \textbf{63.7} \\
			BarlowTwins & - & - & - & {56.8} & \textbf{82.6} & 63.4 \\
			\midrule
			Ours & 47.2& 75.7 & 51.4& 55.5& 81.9 & 61.5\\
		\end{tabular}
	}
\end{table*}
\begin{table*}[!t]
	\centering
    \caption{\textbf{Transfer Learning.} COCO detection and COCO instance segmentation: Mask R-CNN C-4~\cite{maskrcnn} (2x schedule) fine-tuned in COCO 2017 train~\cite{coco}, evaluated in COCO 2017 val.}
	\label{tab:object_detection_coco}
	\resizebox{0.5\linewidth}{!}{
		\begin{tabular}{c|ccc|ccc}
			\toprule
			Method & \multicolumn{3}{c}{COCO detection} & \multicolumn{3}{c}{COCO instance seg.} \\
			\cline{2-4} \cline{5-7}
			& $AP$ & $AP_{50}$ & $AP_{75}$&$AP^{mask}$ & $AP^{mask}_{50}$ & $AP^{mask}_{75}$ \\
			\midrule
			\multicolumn{7}{l}{\textbf{1x schedule}} \\
			\midrule
			Supervised & 38.2& 58.2& 41.2 & 33.3& 54.7& 35.2\\
			\midrule
			SimCLR & 37.9 & 57.7 & 40.9 & 33.3 & 54.6 & 35.3 \\
			MoCo& \textbf{39.2}&58.8&42.5&34.3&55.5&36.6\\
			BYOL & 37.9&57.8&40.9&33.2&54.3&35.0\\
			SwAV & 37.6&57.6&40.3&33.1&54.2&35.1\\
			SimSiam& \textbf{39.2}&\textbf{59.3}&42.1&\textbf{34.4}&\textbf{56.0}&\textbf{36.7}\\
			BarlowTwins & \textbf{39.2}&59.0&\textbf{42.5}&34.3&\textbf{56.0}&36.5\\
			Ours & 38.4& 58.3& 41.2& 33.2& 54.7& 35.0\\
			\midrule
			\multicolumn{7}{l}{\textbf{2x schedule}} \\
			\midrule
			Supervised & 40.0& 59.9& 43.1 & 34.7& 56.5& 36.9\\
			\midrule
			MoCo& 40.7&60.5&44.1&35.4&57.3&37.6\\
			Ours & 39.9& 59.8& 43.2& 34.9& 56.6& 37.1\\
		\end{tabular}
	}
\end{table*}

\subsection{Transfer to other tasks}
Following SimSiam~\cite{SimSiam,TransferLearning}, we conduct several transfer learning experiments. In Tab.~\ref{tab:object_detection}, we compare the representation quality by transfer learning. We fine-tune the parameters in the VOC~\cite{VOC} datasets. The experimental settings follow the codebase from ~\cite{wu2019detectron2}. We find our pretrained model does not have a competitive result with certain self-supervised methods. We conjecture that our method attempts to learn class-level agreement~\cite{ericsson2021well}. In fact, our intention is to learn class information (the category of an image) during the self-supervised training procedure. This attempt may weaken the performance of object detection. Mid-level information may be discarded. However, as mentioned in other self-supervised methods, self-supervised training may provide a superior result to supervised learning. 

Table~\ref{tab:object_detection_coco} shows transfer learning results on COCO dataset. For COCO dataset, we only find the codebase in MoCo's GitHub. Therefore, we follow the settings in MoCo. The performance in COCO dataset is consistent with the performance in VOC dataset. The performance of our method is close to the best method. In fact, we find the learning rate for MoCo on COCO and VOC may not be suitable for our pre-trained model. We may search for an appropriate hyper-parameter for transfer learning in our later version.

\section{Conclusions}
Our method is conceptually analogous to SwAV and DINO. All these methods leverage feature-level cross-entropy to do unsupervised learning. However, SwAV and DINO need approaches to balance the probability distribution. For example, SwAV uses Sinkhorn-Knopp algorithm to balance the probability distribution of all instances in the batch. DINO uses centering on accumulating the bias of probability distribution. The centering mechanism may modify the intensity of different prototypes in the subsequent training. In SwAV and DINO, authors emphasize the importance of maintaining uniformity. However, in this paper, we reduce the dependence on uniformity mechanisms. The perspective of gradients leads us to find a loss function that may have similar behavior as the similarity loss. This point is critical to get rid of uniformity mechanisms.

Our method uses a completely different loss function from those contrastive learning based methods. However, the approach is correlated with similarity loss through the derivation of gradients. This perspective helps our approach to maintain the robustness of those contrastive learning based methods. A reasonable gradient also provides a stable and smooth training perspective compared with those methods which directly feed $q_1$ into $Softmax$. As shown in Fig.~\ref{fig:correlation}, the input of $Softmax$ is crucial to establish the correlation. In fact, our method can be interpreted as conducting similarity loss through the probability distribution. This perspective explains why our method is less affected by dimensionality and uniformity regularization. By maximizing the probability distribution of instances, the method may implicitly learn the similarity between images.

\section*{Acknowledgments}
This work was supported by the National Key R\&D Program of China (Grant No. 2022YFF0711404), Natural Science Foundation of Jiangsu Province (Grant No. BK20201250), the National Science Foundation of China under Grant 62172090, CAAI-Huawei MindSpore Open Fund, Nanjing Municipal Program for Technological Innovation by Overseas Scholars Under Grant No. 1109012301 and Start-up Research Fund of Southeast University under Grant RF1028623097. We thank the Big Data Computing Center of Southeast University for providing the facility support on the numerical calculations in this paper. Jie Gui is the corresponding author of this paper.

{\appendix [Derivation of Equations in Main Paper]

\subsection*{Derivation of EQ. (9) and EQ. (10) in Main Paper}
For loss function:
\begin{equation}
	L_{similarity} = - \INNERDOT{\Q1, \Z2},
\end{equation}
we can get
\begin{align}
	\Q1 = \frac{q_1}{\|q_1\|} , \nonumber \\
	and \quad \frac{\partial L}{\partial \Q1} = - \Z2.
\end{align}

Here we use $p^{(i)}$ to index the $i^{th}$ element in vector $p$. Then we can calculate the gradients for $q_1$:
\begin{align}
	\frac{\partial \Q1^{(i)}}{\partial q_1^{(i)}} = \frac{1}{\|q_1\|} - q_1^{(i)} \frac{q_1^{(i)}}{\|q_1\|^3} = \frac{1}{\|q_1\|} (1 - \Q1^{(i)} \cdot \Q1^{(i)}),
\end{align}
\begin{align}
	\frac{\partial \Q1^{(j)}}{\partial q_1^{(i)}} = - q_1^{(j)} \frac{q_1^{(i)}}{\|q_1\|^3} = \frac{1}{\|q_1\|} (- \Q1^{(j)} \cdot \Q1^{(i)}),
\end{align}
\begin{align}
	\frac{\partial L}{\partial q_1^{(i)}} &= \sum_j \frac{\partial L}{\partial \Q1^{(j)}} \frac{\partial \Q1^{(j)}}{\partial q_1^{(i)}} \nonumber \\ 
	&= \frac{\partial L}{\partial \Q1^{(i)}} \frac{\partial \Q1^{(i)}}{\partial q_1^{(i)}} + \sum_{j!=i} \frac{\partial L}{\partial \Q1^{(j)}} \frac{\partial \Q1^{(j)}}{\partial q_1^{(i)}} \nonumber \\ 
	&= \frac{1}{\|q_1\|} (\frac{\partial L}{\partial \Q1^{(i)}} - \Q1^{(i)} (\sum_j \Q1^{(j)} \cdot \frac{\partial L}{\partial \Q1^{(j)}})) \nonumber \\
	&= \frac{1}{\|q_1\|} (\frac{\partial L}{\partial \Q1^{(i)}} - \Q1^{(i)} \cdot \langle \Q1, \frac{\partial L}{\partial \Q1}\rangle) \nonumber \\
	&= \frac{1}{\|q_1\|} (-\tilde{z}_2^{(i)} - \Q1^{(i)} \cdot \langle \Q1, -\tilde{z}_2\rangle).
\end{align}
Therefore, EQ. (9) in main paper is established. Based on the above equation, we can get
\begin{align}
	(\frac{\partial L}{\partial q_1^{(i)}})^2 =& \frac{1}{\|q_1\|^2} (\tilde{z}_2^{(i)^2} + \Q1^{(i)^2} \cdot \langle \Q1, -\tilde{z}_2\rangle ^ 2 \nonumber \\
	& - 2 (-\tilde{z}_2^{(i)} \cdot \Q1^{(i)}) \langle \Q1, -\tilde{z}_2\rangle),
\end{align}
\begin{align}
	\sum_i (\frac{\partial L}{\partial q_1^{(i)}})^2 =& \frac{1}{\|q_1\|^2} (\|-\tilde{z}_2\|^2 + \langle \Q1, -\tilde{z}_2\rangle ^ 2 - 2 \langle \Q1, -\tilde{z}_2\rangle ^ 2) \nonumber \\
	=& \frac{1}{\|q_1\|^2} (\|-\tilde{z}_2\|^2 - \langle \Q1, -\tilde{z}_2\rangle ^ 2) \nonumber \\
	=& \frac{1}{\|q_1\|^2} (1 - \langle \Q1, \tilde{z}_2\rangle ^ 2).
\end{align}
EQ. (10) in main paper is established as above.

\subsection*{Derivation of EQ. (11) and EQ. (12) in Main Paper}
For loss function:
\begin{align}
	L_{ce}=-\sum_i p(i|z_2, \tau)\log{p(i|q_1, \tau)}, \nonumber \\
	where \quad p(i|x, \tau) = \frac{\exp(\frac{x^{(i)}}{\tau})}{\sum_{j=1} \exp(\frac{x^{(j)}}{\tau})},
\end{align}
we can get another type of cross-entropy loss:
\begin{align}
	L_{ce} &= -\sum_i p(i|z_2, \tau)\log{\frac{\exp(\frac{q_1^{(i)}}{\tau})}{\sum_{j=1} \exp(\frac{q_1^{(j)}}{\tau})}} \nonumber \\
	&= -\sum_i p(i|z_2, \tau) (\frac{q_1^{(i)}}{\tau} - log{\sum_{j=1} \exp(\frac{q_1^{(j)}}{\tau})}) \nonumber \\
	&= \sum_i p(i|z_2, \tau) \cdot log{\sum_{j=1} \exp(\frac{q_1^{(j)}}{\tau})} - \sum_i p(i|z_2, \tau) \frac{q_1^{(i)}}{\tau} \nonumber \\
	&= log{\sum_{j=1} \exp(\frac{q_1^{(j)}}{\tau})} - \sum_i p(i|z_2, \tau) \frac{q_1^{(i)}}{\tau}.
\end{align}
This is the EQ. (12) in main paper. The gradients for $q_1$ can be calculated by
\begin{align}
	\frac{\partial{L_{ce}}}{\partial{q_1}^{(i)}} &= \frac{1}{\tau} \frac{exp(\frac{q_1^{(i)}}{\tau})}{\sum_{j=1} \exp(\frac{q_1^{(j)}}{\tau})} - \frac{1}{\tau} p(i|z_2, \tau) \nonumber \\
	&= \frac{1}{\tau} ( p(i|q_1, \tau) - p(i|z_2, \tau)).
\end{align}
Therefore, EQ. (11) in main paper is established.

\subsection*{Derivation of EQ. (14) in Main Paper}
For loss function
\begin{align}
	L_{mce} = - \sum_i p(i|z_2, \tau) \frac{\Q1^{(i)}}{\tau},
\end{align}
we have
\begin{align}
	\frac{\partial L_{mce}}{\partial \Q1^{(i)}} = - \frac{1}{\tau} p(i|z_2, \tau).
\end{align}
\begin{align}
	\frac{\partial L_{mce}}{\partial q_1^{(i)}} &= \sum_j \frac{\partial L_{mce}}{\partial \Q1^{(j)}} \frac{\partial \Q1^{(j)}}{\partial q_1^{(i)}} \nonumber \\ 
	&= \frac{1}{\|q_1\|} (\frac{\partial L_{mce}}{\partial \Q1^{(i)}} - \Q1^{(i)} \cdot \langle \Q1, \frac{\partial L_{mce}}{\partial \Q1}\rangle),
\end{align}
\begin{align}
	\|\frac{L_{mce}}{\partial q_1}\|^2 &= \frac{1}{\tau^2\|q_1\|^2} (\|\mathbf{P}(z_2)\|^2 - \langle \Q1, \mathbf{P}(z_2) \rangle ^ 2) \nonumber \\
	&= \frac{\|\mathbf{P}(z_2)\|^2}{\tau^2\|q_1\|^2} (1 - \langle \Q1, \frac{\mathbf{P}(z_2)}{\|\mathbf{P}(z_2)\|} \rangle ^ 2).
\end{align}

\subsection*{Explanation of $\tau_1$ in EQ. (15) in Main Paper}
For loss function
\begin{align}
	L_{iccl} =(\log{\sum_j \exp{\frac{\Q1^{(j)}}{\tau_{1}}}}) - (\sum_i p(i|z_2, \tau_2) \frac{\Q1^{(i)}}{\tau_{1}}),
\end{align}
we have
\begin{align}
	\frac{\partial L_{iccl}}{\partial \Q1^{(i)}} = \frac{1}{\tau_1} ( p(i|q_1, \tau_1) - p(i|z_2, \tau_2)),
\end{align}
\begin{align}
	\frac{\partial L_{iccl}}{\partial q_1^{(i)}} &= \sum_j \frac{\partial L_{iccl}}{\partial \Q1^{(j)}} \frac{\partial \Q1^{(j)}}{\partial q_1^{(i)}} \nonumber \\ 
	&= \frac{1}{\|q_1\|} (\frac{\partial L_{iccl}}{\partial \Q1^{(i)}} - \Q1^{(i)} \cdot \langle \Q1, \frac{\partial L_{iccl}}{\partial \Q1}\rangle),
\end{align}

\begin{align}
	\|\frac{L_{iccl}}{\partial q_1}\|^2 &= \frac{\|\mathbf{P}(q_1) - \mathbf{P}(z_2)\|^2}{\tau_1^2\|q_1\|^2} (1 - \langle \Q1, \frac{\mathbf{P}(q_1) - \mathbf{P}(z_2)}{\|\mathbf{P}(q_1) - \mathbf{P}(z_2)\|} \rangle ^ 2).
\end{align}
In main paper, we set $\tau_1 = \|\mathbf{P}(z_2)\|$. The value of $\tau_1$ is adaptive. However, as $\mathbf{P}(q_1)$ is closing to $\mathbf{P}(z_2)$, the magnitude of gradients may be vanishing. Therefore, we provide another setting for $\tau_1$. The default $\tau_1$ is a hyper-parameter (e.g., $\tau_1 = 0.1$ in DINO~\cite{dino}). To make $\tau_1$ become adaptive for different instances, we set $\tau_1$ to be $min(\tau_1, \|\mathbf{P}(z_2)\|)$.

\section*{Implementation Details}
The code has been open-sourced. Details can be seen in \textbf{Code/README.md}. We provide config files of many methods on ImageNet and Imagenette. It is convenient to reproduce the results for different methods. 

\subsection*{Initialization.}
For ResNet backbone, convolution layers' weights are initialized by HE initialization, and convolution layers' biases are initialized as 0. The fc layers' weight and bias for other components (e.g., projection, predictor) are initialized by xavier initialization~\cite{xavier}. The settings follow the details in BYOL.
\subsection*{Augmentation.}
During self-supervised training, we use the following image augmentations (PyTorch-like code).
\begin{itemize}
	\item $RandomResizedCrop$ with an area ratio uniformly sampled between 0.08 and 1.0, and an aspect ratio logarithmically sampled between 3/4 and 4/3.
	\item $Resize$ the patch to the target size of 224x224.
	\item $RandomHorizontalFlip$ the image with a probability of 0.5.
	\item $ColorJitter$ the \{brightness, contrast, saturation and hue\} of the image by the parameters \{0.4, 0.4, 0.4, 0.1\}. This augmentation operation is randomly applied with a probability of 0.8.
	\item $RandomGrayscale$ the image with a probability of 0.2.
	\item $GaussianBlur$ the image using the Gaussian kernel with std in [0.1, 2.0].  This augmentation operation is randomly applied with a probability of 1.0 and 0.1 for two independent transformations, respectively.
	\item $Solarization$ the image with a probability of 0.2 for one of the transformations.
	\item $ToTensor$. Scale the value of [0, 255] to [0.0, 1.0].
	\item $Normalize$ the image with estimated mean and std.
\end{itemize}

\subsection*{Optimizer and learning rate.}
For experiments on Imagenette, we use LARS with $lr$ = 2.0 for 1000 epochs, weight decay = 1e-6, momentum = 0.9, and batch size = 256. For those methods that use momentum encoder, the momentum value of momentum encoder is 0.996. These settings are shared in BYOL's github. For experiments on ImageNet, we use LARS with base $lr$ = 0.3, weight decay = 1e-6, momentum = 0.9, and batch size = 1024. The momentum value for momentum encoder is 0.99. For linear evaluation, the linear classifier training uses base lr = 0.3 with a cosine decay schedule for 100 epochs, weight decay = 1e-6, momentum = 0.9, batch size = 256 with SGD optimizer. We have also tried LARS optimizer with base $lr$ = 0.1, weight decay = 0, momentum = 0.9, epoch = 100, and batch size = 4096. This optimizer may give a competitive result.

\subsection*{Hyper-parameters for object detection.}
We use the detectron2 library for training the detection models and closely follow the evaluation settings from MoCo. We use Faster R-CNN C-4 detection model. The backbone is initialized by our pretrained model. For VOC 07 and VOC 07+12, training has 24K iterations using a batch size of 16 across 8 GPUs with SyncBatchNorm. The initial learning rate for the model is 0.1, which is reduced by a factor of 10 after 18K and 22K iterations. We use linear warmup for 1000 iterations. For COCO dataset, we train Mask R-CNN C-4 backbone on the COCO 2017 training set.  We use a learning
rate of 0.03 and keep the other parameters the same as in MoCo’s Github.
}

 
%

\bibliographystyle{IEEEtran}
\bibliography{IEEEabrv, bibfile}

\newpage

\section{Biography Section}
\vspace{-33pt}
\begin{IEEEbiography}[{\includegraphics[width=1in,height=1.25in,clip,keepaspectratio]{./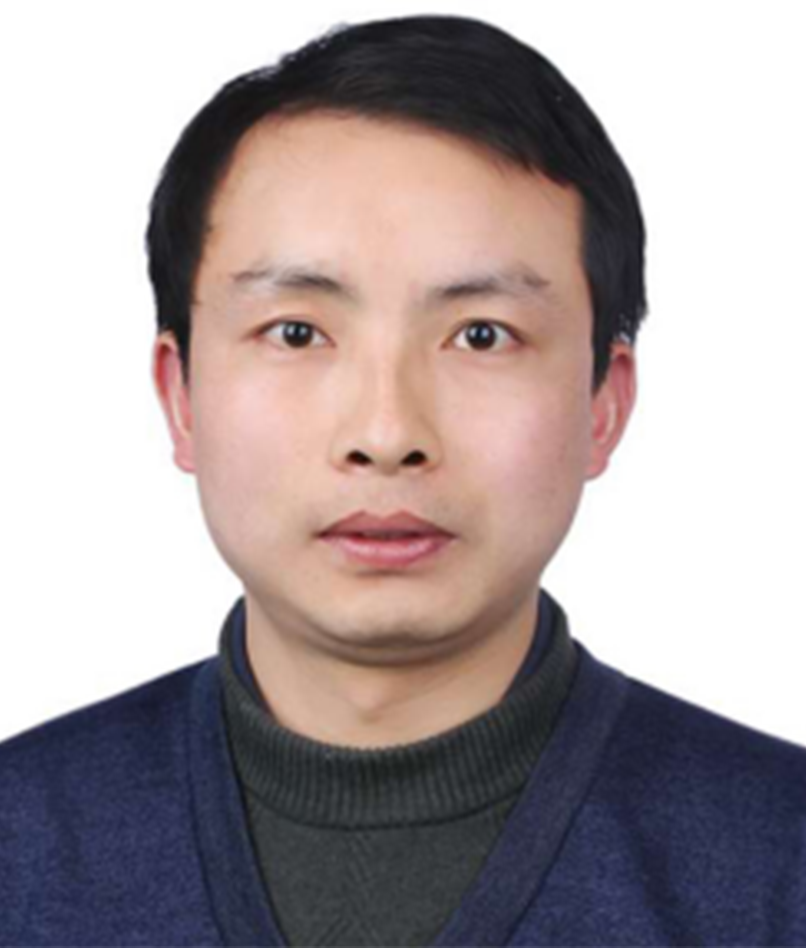}}]{Jidong Ge}
received the Ph.D. degree in computer science from Nanjing University in 2007. He is currently an Associate Professor with the Software Institute, Nanjing University. His current research interests include machine learning, deep learning, image processing, software engineering, NLP, and process mining. His research results have been published in more than 100 papers in international journals and conference papers, including IEEE TRANSACTIONS ON SOFTWARE ENGINEERING (TSE), IEEE/ACM TRANSACTIONS ON NETWORKING (TNET), IEEE TRANSACTIONS ON PARALLEL AND DISTRIBUTED SYSTEMS (TPDS), IEEE TRANSACTIONS ON MOBILE COMPUTING (TMC), IEEE TRANSACTIONS ON SERVICES COMPUTING (TSC), ACM Transactions on Knowledge Discovery from Data (TKDD), IEEE/ACM TRANSACTIONS ON AUDIO, SPEECH, AND LANGUAGE PROCESSING (TASLP), Automated Software Engineering (An International Journal), Computer Networks, Journal of Parallel and Distributed Computing (JPDC), Future Generation Computer Systems (FGCS), Journal of Systems and Software (JSS), Information Sciences, Journal of Network and Computer Applications (JNCA), Journal of Software: Evolution and Process (JSEP), Expert Systems with Applications (ESA), AAAI, ICSE, ESEC/FSE, ASE, IWQoS, and GlobeCom. 
\end{IEEEbiography}

\vspace{11pt}
\vspace{-33pt}
\begin{IEEEbiography}[{\includegraphics[width=1in,height=1.25in,clip,keepaspectratio]{./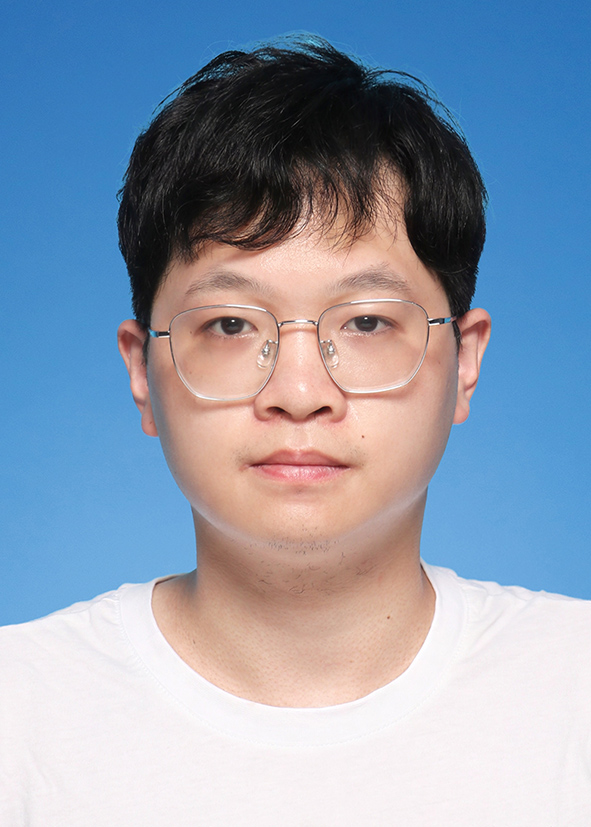}}]{Yuxiang Liu}
	received the BS degree fromthe Software Institute, Nanjing University. 
	He is currently working toward the MS degree in the Software Institute, 
	Nanjing University. His research interests include computer vision and deep learning.
\end{IEEEbiography}


\vspace{11pt}
\vspace{-33pt}
\begin{IEEEbiography}[{\includegraphics[width=1in,height=1.25in,clip,keepaspectratio]{./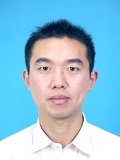}}]{Jie Gui}
	(SM'16) is currently a Professor in the School of Cyber Science and Engineering, Southeast University. He received a B.S. degree in Computer Science from Hohai University, Nanjing, China, in 2004, an M.S. degree in Computer Applied Technology from Hefei Institutes of Physical Science, Chinese Academy of Sciences, Hefei, China, in 2007, and a Ph.D. degree in Pattern Recognition and Intelligent Systems from the University of Science and Technology of China, Hefei, China, in 2010. He is the Associate Editor of Neurocomputing, a Senior Member of the IEEE and ACM, and a CCF Distinguished Member. He has published more than 60 papers in international journals and conferences such as IEEE TPAMI, IEEE TNNLS, IEEE TCYB, IEEE TIP, IEEE TCSVT, IEEE TSMCS, KDD, AAAI, and ACM MM. He is the Area Chair, Senior PC Member, or PC Member of many conferences, such as NeurIPS and ICML. His research interests include machine learning, pattern recognition, and image processing.
\end{IEEEbiography}

\vspace{11pt}
\vspace{-33pt}
\begin{IEEEbiography}[{\includegraphics[width=1in,height=1.25in,clip,keepaspectratio]{./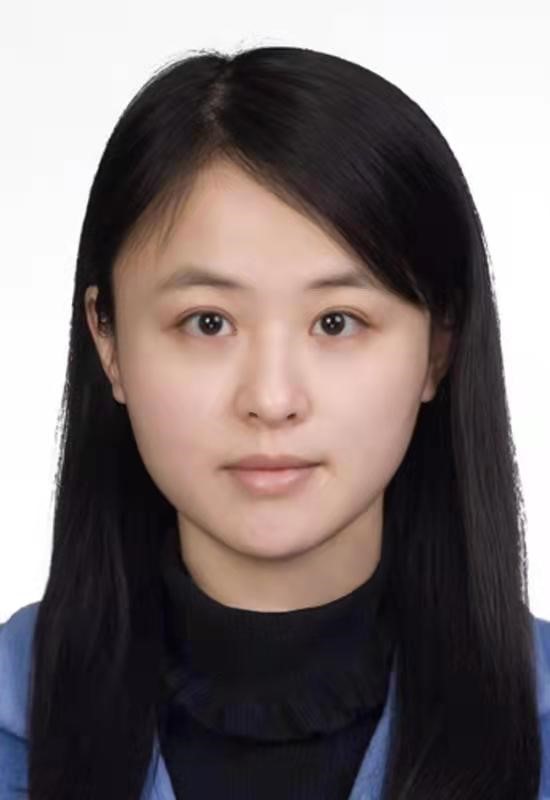}}]{Lanting Fang} is a Lecturer with the School of Cyber Science and Engineering, Southeast University, Nanjing, China. She received her Ph.D. in 2018 from Southeast University, Nanjing, China. Her current research interests include NLP, information extraction, social media and sentiment analysis.
\end{IEEEbiography}

\vspace{11pt}
\vspace{-33pt}
\begin{IEEEbiography}[{\includegraphics[width=1in,height=1.25in,clip,keepaspectratio]{./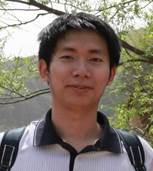}}]{Ming Lin} is a Senior Applied Scientist at Amazon.com LCC. His research interests include Mathematical Foundation of Deep Learning and Statistical Machine Learning, with their applications in deep learning acceleration, computer vision and mobile AI. Before he joined Amazon, he was a Staff Algorithm Engineer at DAMO Academy of Alibaba Group (U.S.) from April 2018 to July 2022. He was a Research Investigator in the Medical School of Michigan University from Sep 2015 to April 2018. He worked as a postdoctoral researcher in the School of Computer Science at Carnegie Mellon University from July 2014 to Sep 2015. He received his Ph.D. degree in computer science from Tsinghua University in 2014. During his Ph.D. study, he had been a visiting scholar in Michigan State University and in CMU from Dec 2013 to July 2014.
\end{IEEEbiography}

\vspace{11pt}
\vspace{-33pt}
\begin{IEEEbiography}[{\includegraphics[width=1in,height=1.25in,clip,keepaspectratio]{./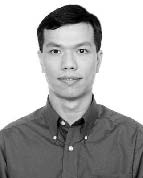}}]{James Tin-Yau Kwok}  (Fellow, IEEE) received
the Ph.D. degree in computer science from The
Hong Kong University of Science and Technology,
Hong Kong, in 1996. He is currently a Professor
with the Department of Computer Science
and Engineering, The Hong Kong University of
Science and Technology. His current research
interests include kernel methods, machine learning,
pattern recognition, and artificial neural networks.
He received the IEEE Outstanding Paper
Award in 2004 and the Second Class Award in
Natural Sciences from the Ministry of Education, China, in 2008. He has
been a Program Co-Chair for a number of international conferences,
and served as an Associate Editor for the IEEE TRANS-ACTIONS ON
NEURAL NETWORKS AND LEARNING SYSTEMS from 2006 to 2012.
He is currently an Associate Editor of Neurocomputing.
\end{IEEEbiography}

\vspace{11pt}
\vspace{-33pt}
\begin{IEEEbiography}[{\includegraphics[width=1in,height=1.25in,clip,keepaspectratio]{./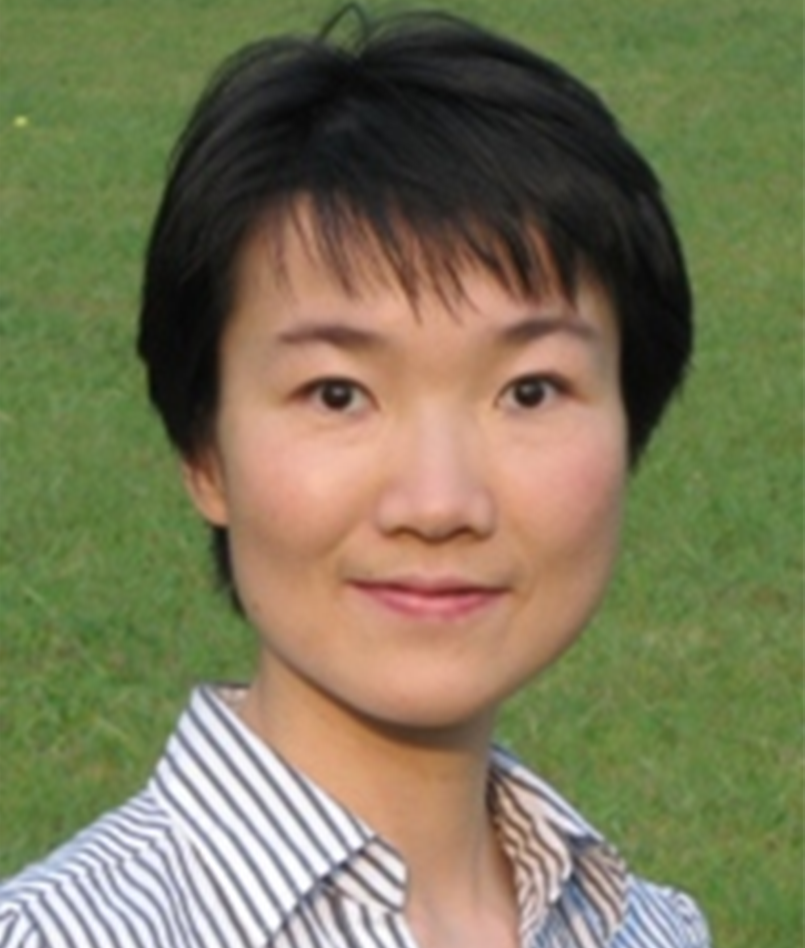}}]{LiGuo Huang}
    is an Associate Professor, Department of Computer Science and Engineering at the Southern Methodist University (SMU), Dallas, TX, USA. She received both her Ph.D. (2006) and M.S. from the Computer Science Department and Center for Systems and Software Engineering (CSSE) at the University of Southern California (USC). Her current research centers around process modeling, simulation and improvement, systems and software quality assurance and information dependability, mining systems and software engineering repository, stakeholder/value-based software engineering, and software metrics. Her research has been supported by NSF, U.S. Department of Defense (DoD) and NSA. She had been intensively involved in initiating the research on stakeholder/value-based integration of systems and software engineering.
\end{IEEEbiography}

\vspace{11pt}
\vspace{-33pt}
\begin{IEEEbiography}[{\includegraphics[width=1in,height=1.25in,clip,keepaspectratio]{./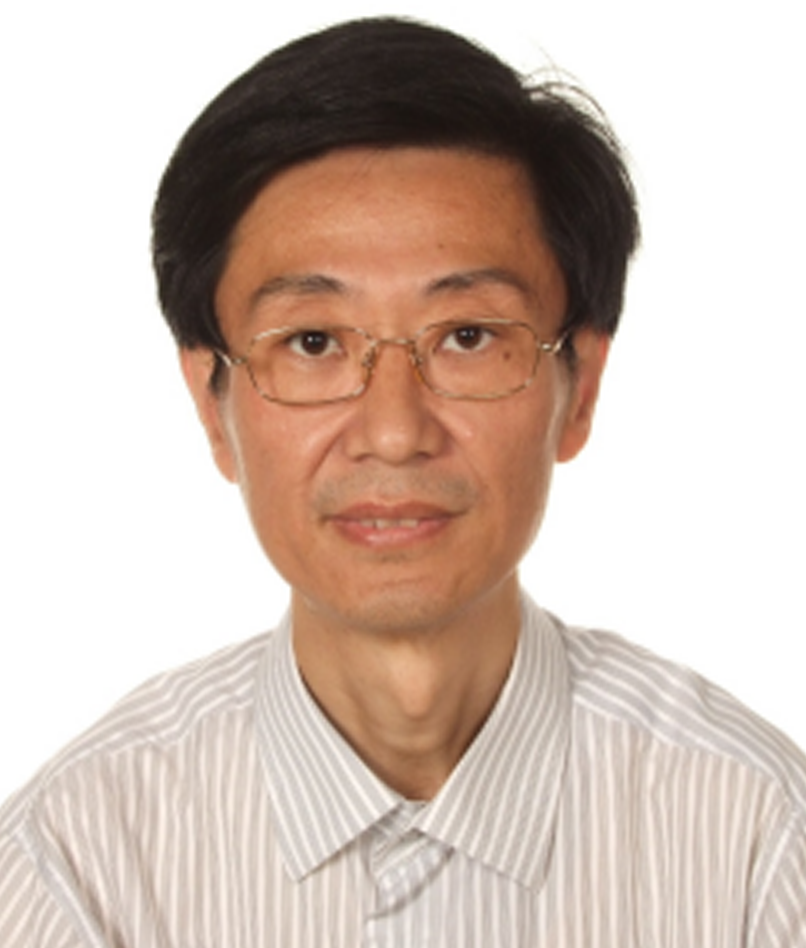}}]{Bin Luo}
	 is a full Professor at Software Institute, Nanjing University. His main research interests include machine learning, deep learning, image processing, software engineering, NLP, process mining. His research results have been published in more than 100 papers in international journals and conference proceedings including IEEE TPDS, IEEE TMC, IEEE TSC, IEEE/ACM TASLP, ACM TKDD, ACM TOIST, FGCS, JSS, Inf. Sci., ESA, ExpSys etc. He is leading the institute of applied software engineering at Nanjing University.
\end{IEEEbiography}

\vfill

\end{document}